\newcommand{\cF}{\mathcal{F}}
\newcommand{\cG}{\mathcal{G}}
\newcommand{\cH}{\mathcal{H}}
\newcommand{\cR}{\mathcal{R}}
\newcommand{\cT}{\mathcal{T}}
\newcommand{\cX}{\mathcal{X}}
\newcommand{\cY}{\mathcal{Y}}
\newcommand{\cC}{\mathcal{C}}
\newcommand{\cO}{\mathcal{O}}
\newcommand{\N}{{\rm I}\kern-0.18em{\rm N}}
\newcommand{\R}{{\rm I}\kern-0.18em{\rm R}}
\newcommand{\h}{{\rm I}\kern-0.18em{\rm H}}
\newcommand{\K}{{\rm I}\kern-0.18em{\rm K}}
\newcommand{\p}{{\rm I}\kern-0.18em{\rm P}}
\newcommand{\E}{{\rm I}\kern-0.18em{\rm E}}
\newcommand{\Z}{{\rm Z}\kern-0.18em{\rm Z}}
\newcommand{\1}{{\rm 1}\kern-0.25em{\rm I}}
\newcommand{\pn}{\p_{\kern-0.25em n}}
\newcommand{\pnm}{\p_{\kern-0.25em n,m}}
\newcommand{\psubm}{\p_{\kern-0.25em m}}
\newtheorem{MyDefinition}{Definition}
\newtheorem{MyLemma}{Lemma}
\newtheorem{MyTheorem}{Theorem}
\newtheorem{MyRemark}{Remark}
\begin{document}

\mainmatter  

\title{Nonparametric Unsupervised Classification}

\titlerunning{Nonparametric Unsupervised Classification}

\author{Yingzhen Yang \and Thomas S. Huang}
\institute{Department of Electrical and Computer Engineering\\
University of Illinois at Urbana-Champaign, USA\\
{\tt \{yyang58,huang\}@ifp.uiuc.edu}}

\toctitle{Nonparametric Unsupervised Classification}
\tocauthor{Yingzhen Yang \hspace{.1in} Thomas S. Huang}
\maketitle

\begin{abstract}
Unsupervised classification methods learn a discriminative classifier from unlabeled data, which has been proven to be an effective way of simultaneously clustering the data and training a classifier from the data. Various unsupervised classification methods obtain appealing results by the classifiers learned in an unsupervised manner. However, existing methods do not consider the misclassification error of the unsupervised classifiers except unsupervised SVM, so the performance of the unsupervised classifiers is not fully evaluated. In this work, we study the misclassification error of two popular classifiers, i.e. the nearest neighbor classifier (NN) and the plug-in classifier, in the setting of unsupervised classification. The upper bound for the misclassification error of both classifiers involves only pairwise similarity between the data points. We prove that the error of the plug-in classifier is asymptotically bounded by the weighted volume of cluster boundary \cite{NarayananBN06}. Also, the normalized graph Laplacian from the induced similarity kernel recovers different types of transition kernels for Diffusion maps \cite{Coifman05,Coifman06}, which reveals the close relationship between manifold learning and unsupervised classification. We show that with the normalized graph Laplacian, the similarity kernel induced by the misclassification error of the plug-in classifier corresponds to the Fokker-Planck operator; and the similarity kernel induced by the volume of misclassified region by the plug-in classifier correspond to the Laplace-Beltrami operator on the data manifold.
\end{abstract}

\section{Introduction}
Clustering methods partition the data into a set of self-similar clusters. Representative clustering methods include K-means \cite{HartiganW79} which minimizes the within-cluster dissimilarities, spectral clustering \cite{Ng01} which identifies clusters of more complex shapes lying on some low dimensional manifolds, and statistical modeling method \cite{Fraley02} approximates the data by a mixture of parametric distribution.

On the other hand, viewing clusters as classes, recent works on unsupervised classification learn a classifier from unlabeled data, and they have established the connection between clustering and multi-class classification from a supervised learning perspective. \cite{XuNLS04} learns a max-margin two-class classifier in an unsupervised manner. Such method is known as unsupervised SVM, whose theoretical property is further analyzed in \cite{Karnin12}. Also, \cite{AgakovB05} and \cite{GomesKP10} learn the kernelized Gaussian classifier and the kernel logistic regression classifier respectively. Both \cite{GomesKP10} and \cite{BridleHM91} adopt the entropy of the posterior distribution of the class label by the classifier to measure the quality of the learned classifier, and the parameters of such unsupervised classifiers can be computed by continuous optimization. More recent work presented in \cite{SugiyamaYKH11} learns an unsupervised classifier by maximizing the mutual information between cluster labels and the data, and the Squared-Loss Mutual Information is employed to produce a convex optimization problem. However, previous methods either do not consider the misclassification error of the learned unsupervised classifiers, one of the most important performance measures for classification, or only minimizes the error of unsupervised SVM \cite{XuNLS04}. Therefore, the performance of the unsupervised classifier is not fully evaluated. Although Bengio et al. \cite{BengioPVDRO03} analyze the out-of-sample error for unsupervised learning algorithms, their method focuses on lower-dimensional embedding of the data points and does not train a classifier from unlabeled data.

In contrast, we analyze the unsupervised nearest neighbor classifier (NN) and the plug-in classifier from unlabeled data by the training scheme for unsupervised classification introduced in \cite{XuNLS04}, and derive the bound for their misclassification error. Although the generalization properties of the NN and the plug-in classifier have been extensity studied \cite{Cover67,Audibert07}, to the best of our knowledge most analysis focuses on the case of average generalization error. Unsupervised classification methods, such as unsupervised SVM \cite{XuNLS04}, measure the quality of a specific data partition by its associated misclassification error, so we derive the data dependent misclassification error with respect to fixed training data. The resultant error bound comprises pairwise similarity between the data points, which also induces the similarity kernel over the data. We prove that the error of the plug-in classifier is asymptotically bounded by the (scaled) weighted volume of cluster boundary \cite{NarayananBN06}, and the latter is designed to encourage the cluster boundary to avoid high density regions following the Low Density Separation assumption \cite{Chapelle05}.

Building a graph where the nodes represent the data and the edge weight is set by the induced similarity kernel, clustering by minimizing the error bound of the two unsupervised classifiers reduces to (normalized) graph-cut problems, which can be solved by normalized graph Laplacian (or Normalized Cut \cite{Shi00}). The normalized graph Laplacian from the similarity kernel induced by the upper bound for the error (or the volume of the misclassified region) of the plug-in classifier renders a certain type of transition kernel for Diffusion maps \cite{Coifman05,Coifman06}, and the Fokker-Planck operator (or the Laplace-Beltrami operator) is recovered by the infinitesimal generator of the corresponding Markov chains. It is interesting to observe that the volume of the misclassified region is independent of the marginal data distribution, which is consistent with the fact that the Laplace-Beltrami operator only captures the geometric information. This implies close relationship between manifold learning and unsupervised classification.

The rest part of this paper is organized as follows. We first introduce the formulation of unsupervised classification in Section~\ref{sec::model}, then derive the error bound for the unsupervised NN and plug-in classifiers and explain the connection to other related methods in Section~\ref{sec::mainresults}. We conclude the paper in Section~\ref{sec::conclusion}.

\section{The Model}
\label{sec::model}
\setcounter{equation}{0}
We first introduce the notations in the formulation of unsupervised classification.
Let $(X,Y)$ be a random couple with joint distribution $P_{XY}$, where $X
\in \cX \subset \R^d$ is a
vector of $d$ features and $Y \in \left\{ {1,2,...,Q} \right\}$ is a label indicating the class to
which $X$ belongs. We assume that $\cX$ is bounded by ${\left[ { - {M_0},{M_0}} \right]^d}$. The sample
$(X_1, Y_1), \ldots, (X_n, Y_n)$ are independent
copies of $(X,Y)$, and we only observe $\left\{ {{X_l}} \right\}_{l = 1}^n$. Suppose $P_X$ is the induced marginal distribution of $X$.



\subsection*{The Training Scheme for Unsupervised Classification}

The training scheme introduced by the unsupervised SVM \cite{XuNLS04,Karnin12} forms the basis for learning a classifier from unlabeled data in a principled way. With any hypothetical labeling $\cY = \{Y_l\}_{l=1}^n$, we can build the corresponding training data ${S_C} \triangleq \left\{ {\left( {{C_i},i} \right)} \right\}_{i = 1}^Q$ for a potential classifier, where $C_i=\{X_l : Y_l=i,1 \le l \le n\}$ is the data with label $i$ and $\{C_i\}_{i=1}^Q$ is a partition of the data. In this way, the quality of a labeling $\{Y_l\}_{l=1}^n$, or equivalently a data partition, can be evaluated by the misclassification error of the classifier learned from the corresponding training data $S_C$\footnote{Two labelings are equivalent if they produce the same data partition $\{C_i\}_{i=1}^Q$, and we refer to $S_C$ as the data partition in the following text.}. Existing methods \cite{XuNLS04,Karnin12} perform clustering by searching for the data partition with minimum associated misclassification error. We adopt this training scheme for unsupervised classification, and analyze the misclassification error of the classifier learned from any fixed data partition (corresponding to a hypothetical labeling $\{Y_i\}_{i=1}^n$ ).

It is worthwhile to mention that previous unsupervised classification methods \cite{SugiyamaYKH11,GomesKP10} circumvent the above combinatorial unsupervised training scheme by learning a probabilistic classifier from the whole data, so they can not evaluate the classification performance of the learned classifier in the learning procedure. Rather than learning the unsupervised SVM \cite{XuNLS04,Karnin12}, we study the misclassification error of the unsupervised NN and plug-in classifiers, revealing the theoretical property of these popular classifiers in the setting of unsupervised classification.

\subsection*{The Misclassification Error}
By the training scheme for unsupervised classification, the misclassification error (or the generalization error) of the classifier $F_{S_C}$ learned from the training data $S_C$ is:
\begin{equation}\label{eq:generalizationerror}
R\left(F_{S_C}\right) \triangleq P\left( F_{S_C} \neq Y \right)
\end{equation}

\noindent In order to evaluate the quality of the data partition $S_C$, we should estimate $R(F_{S_C})$ with any fixed $S_C$ rather than the average generalization error ${\E_{S_C}}\left[{R\left(F_{S_C}\right)}\right]$. $F_{S_C}$ indicates either NN or plug-in classifier from $S_C$, and ${{F_{S_C}}\left( X \right)}$ is the classification function which returns the class label of a sample $X$. We also let $f$ be the probabilistic density function of $P_X$, ${\eta^{(i)}}\left( x \right)$ be the regression function of $Y$ on $X=x$, i.e. ${\eta^{(i)}\left( x \right)} = P\left[ {Y = i\left| {X = x} \right.} \right]$, and $\left( {{f^{(i)}},{\pi^{(i)}}} \right)$ be the class-conditional density function and the prior for class $i$ ($f=\sum\limits_{i}{{\pi^{(i)}}{f^{(i)}}}$). Let $f,\{f^{(i)}\}_{i=1}^Q, \{\eta^{(i)}\}_{i=1}^Q$ be measurable functions, and there are further assumptions on $f$:

\textbf{(A1)} $f$ is bounded, i.e. $0 < f_{min} \le f \le f_{max}$.


\textbf{(A2)} $f$ is H\"{o}lder-$\gamma$ smooth: $\left| {f\left( x \right) - f\left( y \right)} \right| \le c{\left\| {x - y} \right\|}^{\gamma}$ where $c$ is the H\"{o}lder constant and $\gamma > 0$.

Because we will estimate the underlying probabilistic density function frequently in the following text, we introduce the non-parametric kernel density estimator of $f$ as below:
\begin{align}\label{eq:kde}
&{{\hat f}_{n,h}}\left( x \right) = \frac{1}{{n}}\sum\limits_{l = 1}^n {K_{h}\left( {x - {X_l}} \right)}
\end{align}
where
\begin{equation}\label{eq:kernel}
K_{h}\left( x \right) = K\left(\frac{x}{h}\right), K\left(x\right) \triangleq \frac{1}{{{{\left( {2\pi } \right)}^{{d \mathord{\left/
 {\vphantom {d 2}} \right.
 \kern-\nulldelimiterspace} 2}}}}}{e^{ - \frac{{{{\left\| x \right\|}^2}}}{2}}}
\end{equation}
\noindent and $K_{h}\left(\cdot\right)$ is the isotropic Gaussian kernel with bandwidth $h$. We introuduce one assumption on the kernel bandwidth sequence $\{h_n\}_{n=1}^{\infty}$:

\textbf{(B)} (1) $h_n \searrow 0$, 
(2) $\frac{-\log{h_n}}{nh_n^{d+2\gamma}} \to 0$, (4) $\frac{{ - \log {h_n}}}{{\log \log n}} \to \infty$, (5)$h_n^d < ah_{2n}^d$ for some $a>0$.

\cite{Gine02} proves that the kernel density estimator (\ref{eq:kde}) almost sure uniformly converges to the underlying density:
\begin{MyTheorem}\label{theorem::kdeconvergence}
(Theorem 2.3 in Gine et al. \cite{Gine02}, in a slighted change form) Under the assumption (A1)-(A2), suppose the kernel bandwidth sequence $\{h_n\}_{n=1}^{\infty}$ satisfies assumption (B), then with probability $1$
\begin{align}\label{eq:kdeconvergence}
&\mathop {\varlimsup }\limits_{n \to \infty } {h_n^{-\gamma}} \left\| {{{\hat f}_{n,h_n}}\left( x \right) - f\left( x \right)} \right\|_{\infty} = {\cC}_{K}
\end{align}
\noindent
where $\cC_{K} = \int_{\cX}{\left\|x\right\|^{\gamma}K\left(x\right) dx}$
\end{MyTheorem}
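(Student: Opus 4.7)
The plan is to use the classical bias-variance decomposition and show that, at the scale $h_n^\gamma$, the stochastic fluctuation is asymptotically negligible so that the bias term drives the sharp constant. Writing
\begin{equation*}
\hat f_{n,h_n}(x) - f(x) = \underbrace{\hat f_{n,h_n}(x) - \E\hat f_{n,h_n}(x)}_{V_n(x)} + \underbrace{\E\hat f_{n,h_n}(x) - f(x)}_{B_n(x)},
\end{equation*}
I would control the two pieces separately in sup-norm over $\cX$ and show $\|V_n\|_\infty = o(h_n^\gamma)$ a.s., while $\|B_n\|_\infty / h_n^\gamma \to \cC_K$.

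For the variance term, the function class $\{K_{h_n}(x-\cdot): x \in \cX\}$ has uniformly bounded envelope and controlled VC-type covering numbers (the Gaussian kernel is smooth and $\cX$ is bounded). Talagrand's empirical-process concentration inequality, combined with a Borel-Cantelli argument along dyadic blocks (this is where assumption (B)(5) $h_n^d < a h_{2n}^d$ enters, to interpolate between blocks), yields the standard uniform rate
\begin{equation*}
\|V_n\|_\infty = O\Bigl(\sqrt{\tfrac{-\log h_n}{n h_n^d}}\Bigr) \quad \text{a.s.}
\end{equation*}
Assumption (B)(2) then gives $\sqrt{-\log h_n/(n h_n^d)} = o(h_n^\gamma)$, so $\|V_n\|_\infty = o(h_n^\gamma)$ a.s.

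For the upper direction of the bias, a change of variables $u = (x-y)/h_n$ gives
\begin{equation*}
B_n(x) = \int K(u)\bigl(f(x - h_n u) - f(x)\bigr)\,\ud u,
\end{equation*}
and Hölder continuity (A2) yields $|B_n(x)| \le h_n^\gamma \int \|u\|^\gamma K(u)\,\ud u = h_n^\gamma \cC_K$ (after absorbing the Hölder constant into the normalization of the local oscillation of $f$, as in the Giné--Guillou formulation). Taking sup over $x$ and dividing by $h_n^\gamma$ produces the upper half of the equality.

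The main obstacle is the matching lower direction, which is what upgrades the inequality $\varlimsup \le \cC_K$ into the claimed equality. I would argue that the Hölder modulus of $f$ is asymptotically saturated along some sequence of pairs $(y_k, z_k) \in \cX^2$ with $\|y_k - z_k\| \to 0$ and $|f(y_k)-f(z_k)|/\|y_k - z_k\|^\gamma$ approaching the Hölder constant; picking a test point $x_n$ near such a pair, aligning the extremal direction with the spherical symmetry of $K$, and invoking dominated convergence after the rescaling $u = (x - y)/h_n$ would show $|B_n(x_n)|/h_n^\gamma \to \cC_K$. This step is delicate because the pointwise Hölder inequality is one-sided; realizing it inside the integral against $K$ requires that the oscillation of $f$ is radially coherent at the extremal scale. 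In practice I would invoke the precise Giné--Guillou theorem at this point, since their sharp asymptotic constant result for uniform kernel density deviations is exactly designed to handle this matching. Combining the three steps, $\varlimsup h_n^{-\gamma}\|\hat f_{n,h_n} - f\|_\infty = \cC_K$ a.s.
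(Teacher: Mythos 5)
First, note that the paper offers no proof of this statement at all: it is imported (in ``slightly changed form'') from Theorem 2.3 of Gin\'e and Guillou \cite{Gine02}, so there is no internal argument to compare yours against. Your skeleton --- the decomposition into $V_n=\hat f_{n,h_n}-\E\hat f_{n,h_n}$ and $B_n=\E\hat f_{n,h_n}-f$, the VC/Talagrand control of $\|V_n\|_\infty$ with blocking (indeed where (B)(4)--(B)(5) enter), and the observation that (B)(2) forces $\|V_n\|_\infty=o(h_n^\gamma)$ a.s. --- is exactly the Gin\'e--Guillou machinery, and that half of your argument is sound.

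The genuine gap is the one you yourself flag and then defer: the matching lower bound. Invoking ``the precise Gin\'e--Guillou theorem'' does not close it, because their sharp-constant results concern the \emph{centered} process $\hat f_{n,h_n}-\E\hat f_{n,h_n}$ at scale $\sqrt{|\log h_n|/(nh_n^d)}$; they say nothing about the bias saturating the H\"older modulus. Under (A1)--(A2) the equality in (\ref{eq:kdeconvergence}) is in fact not provable: (A2) is a one-sided upper bound, so any $f$ smoother than H\"older-$\gamma$ (e.g.\ constant in the interior of $\cX$) has $h_n^{-\gamma}\|B_n\|_\infty\to 0$, and combined with $\|V_n\|_\infty=o(h_n^\gamma)$ this gives $\varlimsup_n h_n^{-\gamma}\|\hat f_{n,h_n}-f\|_\infty=0\ne\cC_K$. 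Moreover, even your upper bound is $c\,\cC_K$, not $\cC_K$: the H\"older constant $c$ of (A2) cannot be ``absorbed.'' What your first two steps correctly establish, and all the paper ever uses downstream (only $\|\hat f_{n,h_n}-f\|_\infty=\cO(h_n^\gamma)$ is invoked in Lemma~\ref{lemma::generalizationerrorboundlemma} and Theorem~\ref{theorem::pluginerrortheorem1}), is the inequality $\varlimsup_n h_n^{-\gamma}\|\hat f_{n,h_n}-f\|_\infty\le c\,\cC_K$ a.s.; you should stop there rather than attempt the lower direction. A further point you silently correct: as written in (\ref{eq:kde})--(\ref{eq:kernel}) the estimator lacks the $h^{-d}$ normalization, so your change of variables $u=(x-y)/h_n$ (and the statement itself) presupposes the standard $\frac{1}{nh^d}\sum_{l}K\bigl(\frac{x-X_l}{h}\bigr)$.
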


Similarly, the kernel density estimator of the class-conditional density function $f^{(i)}$ is
\begin{align}\label{eq::fjkde}
&{{\hat f}_{n,h}^{(i)}\left( x \right)} = \frac{1}{{n{\pi^{(i)}}}}\sum\limits_{l = 1}^n {{K_{h}}\left( {x - {X_l}} \right)\1_{\{Y_l=i\}}}
\end{align}

\noindent ${\1}$ is an indicator function. By the similar argument for the kernel density estimator (\ref{eq:kde}), under the assumption (B), (C1)-(C2), we have the almost sure uniform convergence for $\hat f_n^{(i)}$, i.e. $\mathop {\varlimsup }\limits_{n \to \infty } {h_n^{-\gamma}} \left\| {{{\hat f}_{n,h_n}^{(i)}}\left( x \right) - f^{(i)}\left( x \right)} \right\|_{\infty} = {\cC}_{K}$ with probability $1$ for $1 \le i \le Q$.

\textbf{(C1)} $\{f^{(i)}\}_{i=1}^Q$ are bounded, i.e. $0 < f_{min}^{(i)} \le f^{(i)} \le f_{max}^{(i)}, 1 \le i \le Q$.


\textbf{(C2)} $f^{(i)}$ is H\"{o}lder-$\gamma$ smooth: $\left| {f^{(i)}\left( x \right) - f^{(i)}\left( y \right)} \right| \le {c_i}{\left\| {x - y} \right\|}^{\gamma}$ where $c_i$ is the H\"{o}lder constant, $1 \le i \le Q$, and $\gamma > 0$.

Note that the assumption (C2) indicates (A2).

\section{Main Results}\label{sec::mainresults}

We prove the error bound for the unsupervised NN and plug-in classifiers, and then show the connection to other related methods in this section.

\subsection{Unsupervised Classification By Nearest Neighbor}

Since the NN rule makes hard decision for a given datum, we introduce the following soft NN cost function which converges to the NN classification function, similar to the one adopted by Neighbourhood Components Analysis \cite{GoldbergerRHS04}:
\begin{MyDefinition}
The soft NN cost function is defined as
\begin{equation}\label{eq:softnncost}
{\hat {NN}_{S_C,{h^*}}}\left( {x,i} \right) = \frac{{\sum\limits_{l = 1}^N {{K_{{h^*}}}\left( {x - {X_l}} \right){{\1}_{\{Y_l=i\}}}} }}{{\sum\limits_{l = 1}^N {{K_{{h^*}}}\left( {x - {X_l}} \right)} }}
\end{equation}
where ${\hat {NN}_{S_C,{h^*}}}\left( {x,i} \right)$ represents the probability that the datum $x$ is assigned to class $i$ by the soft NN rule $\hat {NN}_{S_C,{h^*}}$ learned from $S$.
\end{MyDefinition}

Then we have the misclassification error of the soft NN:
\begin{MyLemma}\label{lemma::generalizationerrorlemma}
The misclassification error of the soft NN is given by
\begin{equation}\label{eq:softnngeneralizationerror}
R(\hat {NN}_{{S_C},{h^*}}) = \sum\limits_{i,j = 1,...,Q,i \ne j} {{{\E}_X}\left[ {{\eta^{(i)}}\left( X \right){{\hat {NN}}_{S_C,{h^*}}}\left( {X,j} \right)} \right]}
\end{equation}
\end{MyLemma}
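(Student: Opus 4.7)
The plan is essentially to unfold the definition of $R(\hat{NN}_{S_C,h^*})$ by conditioning on $X$ and $Y$ and then regrouping terms. The key conceptual point, which I would state at the outset, is that the soft NN rule should be interpreted as a \emph{randomized} classifier: on input $x$, it outputs label $j$ with probability $\hat{NN}_{S_C,h^*}(x,j)$. This interpretation is consistent because $\sum_{j=1}^Q \hat{NN}_{S_C,h^*}(x,j) = 1$ by the definition in (\ref{eq:softnncost}), and it lets us talk about $P(\hat{NN}_{S_C,h^*}(X) \neq Y)$.

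First, I would write
\begin{equation*}
R(\hat{NN}_{S_C,h^*}) = P(\hat{NN}_{S_C,h^*}(X) \neq Y) = \E_{X,Y}\bigl[\,P\bigl(\hat{NN}_{S_C,h^*}(X) \neq Y \,\big|\, X, Y\bigr)\bigr],
\end{equation*}
using the tower property and the independence of the (internal randomness of the) classifier from $(X,Y)$ given the training sample $S_C$ (which is fixed).

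Next, I would evaluate the inner conditional probability: given $X = x$ and $Y = i$, misclassification occurs exactly when the randomized soft NN returns some $j \neq i$, so
\begin{equation*}
P\bigl(\hat{NN}_{S_C,h^*}(X) \neq Y \,\big|\, X = x, Y = i\bigr) = \sum_{j \neq i} \hat{NN}_{S_C,h^*}(x,j).
\end{equation*}
Taking expectation over $Y$ first, and using $P(Y = i \mid X = x) = \eta^{(i)}(x)$, gives
\begin{equation*}
R(\hat{NN}_{S_C,h^*}) = \E_X\Biggl[\sum_{i=1}^Q \eta^{(i)}(X) \sum_{j \neq i} \hat{NN}_{S_C,h^*}(X,j)\Biggr].
\end{equation*}
Finally, I would pull the double sum outside the expectation by linearity and relabel, arriving at the claimed identity.

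There is essentially no hard step here; the argument is a bookkeeping expansion. The only place where one must be careful is the randomized-classifier interpretation that justifies writing $P(F_{S_C} \neq Y \mid X, Y)$ as a sum of the soft weights. If one preferred to avoid randomization, an alternative would be to interpret $\hat{NN}_{S_C,h^*}(x,j)$ directly as the probability mass the soft rule assigns to label $j$ and define the expected $0$-$1$ loss $\ell(x,y) = \sum_{j \neq y} \hat{NN}_{S_C,h^*}(x,j)$; then $R$ is by definition $\E_{X,Y}[\ell(X,Y)]$, and the same manipulation yields the lemma.
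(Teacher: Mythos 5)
Your proposal is correct and is essentially the argument the paper has in mind — the paper simply states that the lemma ``can be proved by the definition of misclassification error'' and gives no further detail, and your expansion (tower property over $X$, then over $Y$, reading $\eta^{(i)}(x) = P(Y=i\mid X=x)$, then regrouping the double sum) is the obvious way to carry that out. The one genuinely clarifying observation you add, which the paper leaves implicit, is that $R(\hat{NN}_{S_C,h^*})$ must be read as the expected $0$--$1$ loss of a \emph{randomized} classifier (equivalently, as $\E_{X,Y}\bigl[\sum_{j\neq Y}\hat{NN}_{S_C,h^*}(X,j)\bigr]$), since the soft rule does not return a single label; that is exactly the right reading and it makes equation (\ref{eq:generalizationerror}) meaningful for the soft NN.
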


Lemma~\ref{lemma::generalizationerrorlemma} can be proved by the definition of misclassification error. Lemma~\ref{lemma::generalizationerrorboundlemma} shows that, with a large probability, the error of the soft NN (\ref{eq:softnngeneralizationerror}) is bounded. To facilitate our analysis, we introduce the cover of ${\mathcal X}$ as below:

\begin{MyDefinition}\label{definition::taucover}
The $\tau$-cover of the set ${\mathcal X}$ is a sequence of sets $\left\{ {{P_1},{P_2},...,{P_L}} \right\}$ such that ${\mathcal X} \subseteq \bigcup\limits_{r = 1}^L {{P_r}}$ and each ${P_r}$ is a box of length $\tau$ in $\R^d$, $1 \le r \le L$.
\end{MyDefinition}

\begin{MyLemma}\label{lemma::generalizationerrorboundlemma}
Under the assumption (A1), (C1)-(C2), suppose the kernel bandwidth sequence $\{h_n\}_{n=1}^{\infty}$ satisfies assumption (B), then with probability greater than $1 - 2L{e^{-M_{h^*}}}$ the misclassification error of the soft NN, i.e. $R(\hat {NN}_{{S_C},{h^*}})$, satisfies:
\begin{align}\label{eq:1nngeneralizationerrorbound}
&\frac{1}{n}\sum\limits_{i \ne j} {\sum\limits_{l = 1}^n {{{\1}_{\{Y_l = j \}}}\int_{\mathcal X} {\frac{{{\pi^{(i)}}{\hat f_{n,h_n}^{(i)}}\left( x \right){K_{{h^*}}}\left( {x - {X_l}} \right)}}{{{{\hat \E}_Z}\left[ {{K_{{h^*}}}\left( {x - Z} \right)} \right] + \cO\left(h_n^{\gamma}\right) + \widetilde \varepsilon }}} } } dx + \cO\left(h_n^{\gamma}\right) \le R(\hat {NN}_{{S_C},{h^*}}) \nonumber \\
&\le \frac{1}{n}\sum\limits_{i \ne j} {\sum\limits_{l = 1}^n{{{\1}_{\{Y_l =j\}}}\int_{\mathcal X} {\frac{{{\pi^{(i)}}{\hat f_{n,h_n}^{(i)}}\left( x \right){K_{{h^*}}}\left( {x - {X_l}} \right)}}{{{{\hat \E}_Z}\left[ {{K_{{h^*}}}\left( {x - Z} \right)} \right] + \cO\left(h_n^{\gamma}\right) - \widetilde \varepsilon }}} } } dx + \cO\left(h_n^{\gamma}\right)
\end{align}
\noindent where ${{\hat \E}}\left[Z\right] = \int_{\cX}{z\hat f_{n,h_n}\left(z\right)} dx$, $L$ is the size of the $\tau$-cover of ${\mathcal X}$, $M_{h^*}=2n{\left(2\pi\right)}^{d}{h^*}^{2d}{\varepsilon ^2}$, $\widetilde \varepsilon  = {T_1{\left( h^* \right)}}{\sqrt d }\tau  + c\left({\sqrt d}\tau\right)^{\gamma}+\varepsilon$, $T_1{\left( {h^*} \right)} = \frac{1}{{e{}^{{1 \mathord{\left/
 {\vphantom {1 2}} \right. \kern-\nulldelimiterspace} 2}}{{\left( {2\pi } \right)}^{{d \mathord{\left/ {\vphantom {d 2}} \right.
 \kern-\nulldelimiterspace} 2}}}{h^*}^{d + 1}}}$, $h^*,\tau,\varepsilon$ are small enough such that $\widetilde \varepsilon  < {f_{\min }}$.
\end{MyLemma}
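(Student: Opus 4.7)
Begin from Lemma~\ref{lemma::generalizationerrorlemma}. Expanding $\E_X$ against $f$, invoking the Bayes identity $f(x)\eta^{(i)}(x)=\pi^{(i)}f^{(i)}(x)$, unfolding the soft NN ratio and swapping the sum over $l$ with the integral by Fubini yields
\[
R(\hat{NN}_{S_C,h^*})=\sum_{i\ne j}\frac{1}{n}\sum_{l=1}^n\1_{Y_l=j}\int_{\cX}\frac{\pi^{(i)}f^{(i)}(x)\,K_{h^*}(x-X_l)}{B(x)}\,dx,
\]
with $B(x):=\frac{1}{n}\sum_l K_{h^*}(x-X_l)$. The class-conditional version of Theorem~\ref{theorem::kdeconvergence}, valid under (B) and (C1)-(C2), gives $\|\hat f^{(i)}_{n,h_n}-f^{(i)}\|_\infty=\cO(h_n^\gamma)$ almost surely; since the soft NN ratio lies in $[0,1]$ and $\cX$ is bounded, the substitution $f^{(i)}\to\hat f^{(i)}_{n,h_n}$ in the numerator perturbs the total by an additive $\cO(h_n^\gamma)$, which is precisely the $+\cO(h_n^\gamma)$ sitting outside the integral in~(\ref{eq:1nngeneralizationerrorbound}).

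The crux is a uniform two-sided approximation
\[
\hat{\E}_Z[K_{h^*}(x-Z)]+\cO(h_n^\gamma)-\widetilde\varepsilon\;\le\;B(x)\;\le\;\hat{\E}_Z[K_{h^*}(x-Z)]+\cO(h_n^\gamma)+\widetilde\varepsilon,
\]
which I would prove by chaining through the $\tau$-cover of Definition~\ref{definition::taucover}. Let $\cE(x):=\int K_{h^*}(x-z)f(z)\,dz$. Since $K_{h^*}\le (2\pi)^{-d/2}(h^*)^{-d}$, Hoeffding's inequality at a fixed centre $x_r$ gives $|B(x_r)-\cE(x_r)|\le\varepsilon$ with probability $1-2e^{-M_{h^*}}$, so a union bound over the $L$ centres controls all grid points simultaneously with probability $\ge 1-2Le^{-M_{h^*}}$. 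To transport the bound to an arbitrary $x\in P_r$ (with $\|x-x_r\|\le\sqrt d\,\tau$) I would use three further chain terms: $|B(x)-B(x_r)|\le T_1(h^*)\sqrt d\,\tau$, because $B$ averages translates of $K_{h^*}$ whose gradient norm is at most $T_1(h^*)$; $|\cE(x)-\cE(x_r)|\le c(\sqrt d\,\tau)^\gamma$, by rewriting $\cE$ as the convolution $\int K_{h^*}(u)f(x-u)\,du$ and invoking the Hölder-$\gamma$ smoothness of $f$ that (C2) passes on to $f=\sum_i\pi^{(i)}f^{(i)}$; and $|\cE(x)-\hat{\E}_Z[K_{h^*}(x-Z)]|\le\cO(h_n^\gamma)$, obtained by inserting $\|\hat f_{n,h_n}-f\|_\infty=\cO(h_n^\gamma)$ (Theorem~\ref{theorem::kdeconvergence}) inside $\int K_{h^*}(x-z)\,\cdot\,dz$ and using $\int K_{h^*}=1$. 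Summing the Hoeffding term and the three chain terms produces exactly $\widetilde\varepsilon+\cO(h_n^\gamma)$.

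Substituting the one-sided bounds on $B(x)$ into the denominator of the expression produced after the $f^{(i)}$ substitution yields the upper and lower inequalities of~(\ref{eq:1nngeneralizationerrorbound}); the hypothesis $\widetilde\varepsilon<f_{\min}$ is exactly what makes $\hat{\E}_Z[K_{h^*}(\cdot-Z)]+\cO(h_n^\gamma)-\widetilde\varepsilon$ strictly positive, so that its reciprocal is well defined (note $\hat{\E}_Z[K_{h^*}(x-Z)]$ is close to $f(x)\ge f_{\min}$ by the third chain term). The hard part is the uniform concentration above: the Lipschitz constant $T_1(h^*)\sim(h^*)^{-(d+1)}$ blows up as $h^*\to 0$, so $\tau$ must be chosen much finer than $h^*$, while $\varepsilon$, $\tau$ and $L$ have to be coupled so that the union-bound probability $1-2Le^{-M_{h^*}}$ stays large and $\widetilde\varepsilon$ remains below $f_{\min}$; threading these three scales is precisely what forces the scaling conditions in (B) together with the smallness requirements on $h^*$, $\tau$, $\varepsilon$.
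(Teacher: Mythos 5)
Your proof is correct and follows essentially the same route as the paper: a $\tau$-cover of $\cX$, Hoeffding plus a union bound at the cover centres, a Lipschitz chain term on $T(x)=B(x)$ and a H\"older chain term on the expected kernel, then substitution of $\hat f_{n,h_n}$ and $\hat f^{(i)}_{n,h_n}$ for $f$ and $f^{(i)}$. You spell out more explicitly than the paper where the outer $\cO(h_n^\gamma)$ (numerator substitution) and the denominator's $\cO(h_n^\gamma)$ (replacing $\E_Z$ by $\hat\E_Z$) come from, which is a helpful clarification rather than a different approach.
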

\begin{proof}
Let $\left\{ {{P_1},{P_2},...,{P_L}} \right\}$ be the $\tau$-cover of the set ${\mathcal X}$. Suppose $L$ points $\left\{ {{{\widehat x}_r}} \right\}_{r = 1}^L$ are chosen from $\mathcal X$ and ${\widehat x_r} \in {P_r}$. For each $1 \le r \le L$, according to the Hoeffding's inequality
\begin{align}\label{eq:kernelsumprob}
\Pr \left[ {\left| {{T\left( {{{\widehat x}_r}} \right)} - {{\E}_Z}\left[ {{K_{{h^*}}}\left( {{{\widehat x}_r} - Z} \right)} \right]} \right| > \varepsilon } \right] < 2{e^{-M_{h^*}}}
\end{align}
where $T\left( x \right) \triangleq \frac{\sum\limits_{l = 1}^n {{K_{{h^*}}}\left( {x - {X_l}} \right)}}{n}$.
By the union bound, the probability that the above event happens for $\left\{ {{{\widehat x}_r}} \right\}_{r = 1}^L$ is less than $2L{e^{-M_{h^*}}}$. It follows that with probability greater than $1-2L{e^{-M_{h^*}}}$,
\begin{align}\label{eq:kernelsumprobunion}
\left| {{T\left( {{{\widehat x}_r}} \right)} - {{\E}_Z}\left[ {{K_{{h^*}}}\left( {{{\widehat x}_r} - Z} \right)} \right]} \right| \le \varepsilon
\end{align}
holds for any $1 \le r \le L$. For any $x \in \mathcal X$, $x \in {P_r}$ for some $P_r$, so that
\begin{align}\label{eq:Tdiff}
&\left| {T\left( x \right) - T\left( {{{\hat x}_r}} \right)} \right| \le T_1{\left( h^* \right)}\left\| {x - {{\hat x}_r}} \right\| \le
T_1{\left( h^* \right)}{\sqrt d }\tau
\end{align}
\noindent where $T_1{\left( {h^*} \right)} = \frac{1}{{e{}^{{1 \mathord{\left/
 {\vphantom {1 2}} \right. \kern-\nulldelimiterspace} 2}}{{\left( {2\pi } \right)}^{{d \mathord{\left/ {\vphantom {d 2}} \right.
 \kern-\nulldelimiterspace} 2}}}{h^*}^{d + 1}}}$. Moreover,
\begin{align}\label{eq:Expectationdiff}
&\left|{{\E}_Z}\left[ {{K_{{h^*}}}\left( {{{\widehat x}_r} - Z} \right)} \right] - {{\E}_Z}\left[ {{K_{{h^*}}}\left( x - Z \right)} \right] \right| \le c\left({\sqrt d}\tau\right)^{\gamma}
\end{align}
Combining (\ref{eq:kernelsumprobunion}), (\ref{eq:Tdiff}) and (\ref{eq:Expectationdiff}),
\begin{align}\label{eq:Tcombine}
\left| {{T\left( x \right)} - {{\E}_Z}\left[ {{K_{{h^*}}}\left( x - Z \right)} \right]} \right| \le {T_1{\left( h^* \right)}}{\sqrt d }\tau  + c\left({\sqrt d}\tau\right)^{\gamma} + \varepsilon
\end{align}
By Theorem~\ref{theorem::kdeconvergence}, $\left\| {{{\hat f}_{n,h_n}}\left( x \right) - f\left( x \right)} \right\|_{\infty} = {\cO}\left({h_n^{\gamma}}\right)$. Similarly, $\left\| {{{\hat f}_{n,h_n}^{(i)}}\left( x \right) - f^{(i)}\left( x \right)} \right\|_{\infty} = {\cO}\left({h_n^{\gamma}}\right)$. Substituting ${{\hat f}_{n,h_n}}$ and ${{\hat f}_{n,h_n}^{(i)}}$ for $f$ and $f^{(i)}$, and applying (\ref{eq:Tcombine}), (\ref{eq:1nngeneralizationerrorbound}) is verified. 
\qed
\end{proof}


Denote the classification function of the NN by ${NN}_{S_C}$, it can be verified that $\mathop {\lim }\limits_{{h^*} \to 0}  {\hat {NN}_{S_C,{h^*}}}\left( {X,i} \right)={NN}_{S_C}\left(X\right)$. In order to approach the misclassification error of the NN, we construcut a sequence $\{h_n^*\} \to 0$. Letting $n \to \infty$, we further have the asymptotic misclassification errof of the soft NN:

\begin{MyTheorem}\label{theorem::generalizationerrorboundasymptotictheorem}
Let $\left\{ {h_n^*} \right\}_{n = 1}^\infty$ be a sequence such that $\mathop {\lim }\limits_{n \to \infty } h_n^* = 0$ and $h_n^* \ge {n^{ - d_0}}$ with $d_0 < \frac{1}{2d}$. Under the assumption (A1), (C1)-(C2), when $n \to \infty$, then with probability $1$,
\begin{align}\label{eq:1nngeneralizationerrorlimitbound}
\mathop {\lim }\limits_{n \to \infty } \{{R\left({\hat {NN}_{S_C,h_n^*}}\right)}-
\frac{1}{n^2}\sum\limits_{l < m} {{\theta _{lm}}{H_{lm}}}\}=0
\end{align}
\begin{footnotesize}
\begin{align}\label{eq:Hlm}
&{H_{lm}} = {K_{h_n}}\left( {{x_l} - {x_m}} \right)\left( {\frac{1}{{{{\hat f}_{n,h_n}}\left( {{X_l}} \right)}} + \frac{1}{{{{\hat f}_{n,h_n}}\left( {{X_m}} \right)}}} \right)
\end{align}
\noindent where $\hat f_{n,h_n}$ is the kernel density estimator defined by (\ref{eq:kde}) with kernel bandwidth sequence $\{h_n\}$ under the assumption (B), ${{\theta _{lm}}} = {\1}_{\{Y_l \ne Y_m\}}$ is a class indicator function such that ${\theta _{lm}} = 1$ if $X_l$ and $X_m$ belongs to different classes, and $0$ otherwise.
\end{footnotesize}
\end{MyTheorem}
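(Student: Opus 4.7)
\emph{Plan.} The goal is to pass to the limit in the two-sided bound of Lemma~\ref{lemma::generalizationerrorboundlemma}, forcing the sandwich to collapse to $\frac{1}{n^2}\sum_{l<m}\theta_{lm}H_{lm}$. First, I would choose deterministic sequences $\tau_n\searrow 0$ and $\varepsilon_n\searrow 0$ (both polynomial in $n$) so that two things happen simultaneously: (i) the bad-event probability $2L_n e^{-M_{h_n^*}}$ is summable in $n$, so that by Borel--Cantelli the bounds of Lemma~\ref{lemma::generalizationerrorboundlemma} hold almost surely for all but finitely many $n$; and (ii) the slack $\widetilde\varepsilon_n = T_1(h_n^*)\sqrt d\,\tau_n + c(\sqrt d\,\tau_n)^\gamma + \varepsilon_n$ tends to zero. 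Since $T_1(h_n^*)$ scales like $(h_n^*)^{-d-1}$ and the hypothesis $h_n^*\ge n^{-d_0}$ with $d_0<1/(2d)$ leaves polynomial room, one may take $\tau_n=n^{-\alpha}$ for sufficiently large $\alpha$ and $\varepsilon_n\asymp n^{-\beta}$ for suitably small $\beta$; the requirement $n(h_n^*)^{2d}\varepsilon_n^2\gg\log L_n = \Theta(\log n)$ is exactly what $d_0<1/(2d)$ permits.

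Second, I would simplify the denominator in (\ref{eq:1nngeneralizationerrorbound}). Writing $\hat\E_Z[K_{h_n^*}(x-Z)] = (K_{h_n^*}\ast\hat f_{n,h_n})(x)$ and noting that $\hat f_{n,h_n}$ is Lipschitz on the scale $h_n$, this convolution equals $\hat f_{n,h_n}(x)+o(1)$ uniformly as $h_n^*\to 0$. Combined with $\widetilde\varepsilon_n\to 0$, the $\mathcal O(h_n^\gamma)$ correction, and the a.s.\ uniform lower bound $\hat f_{n,h_n}\ge f_{\min}/2$ that follows from Theorem~\ref{theorem::kdeconvergence}, the reciprocal of the denominator converges uniformly to $1/\hat f_{n,h_n}(x)$ on $\cX$.

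Third, I would evaluate the integral. Inserting $\pi^{(i)}\hat f_{n,h_n}^{(i)}(x)=\tfrac1n\sum_m K_{h_n}(x-X_m)\mathbf 1_{\{Y_m=i\}}$ and exchanging summation with integration rewrites the central quantity in Lemma~\ref{lemma::generalizationerrorboundlemma} as
\begin{equation*}
\frac{1}{n^2}\sum_{l,m\,:\,Y_l\ne Y_m}\int_{\cX}\frac{K_{h_n^*}(x-X_l)\,K_{h_n}(x-X_m)}{\hat f_{n,h_n}(x)+o(1)}\,dx+\mathcal O(h_n^\gamma).
\end{equation*}
Since $h_n^*\to 0$, the kernel $K_{h_n^*}(\,\cdot\,-X_l)$ acts as an approximate identity on functions smooth at the scale $h_n$, and the inner integral converges to $K_{h_n}(X_l-X_m)/\hat f_{n,h_n}(X_l)$. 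Regrouping the sum over ordered pairs $(l,m)$ with $Y_l\ne Y_m$ as a sum over $l<m$ with $\theta_{lm}=1$, and pairing the two terms $1/\hat f_{n,h_n}(X_l)$ and $1/\hat f_{n,h_n}(X_m)$, reproduces exactly $H_{lm}$. This gives the same limit for both the upper and the lower bound of Lemma~\ref{lemma::generalizationerrorboundlemma}, closing the squeeze.

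The main obstacle, I expect, is making the approximate-identity step of stage three uniform: one must control $\bigl|\int K_{h_n^*}(x-X_l)g_n(x)\,dx - g_n(X_l)\bigr|$ uniformly in $l$, where $g_n(x)=K_{h_n}(x-X_m)/\hat f_{n,h_n}(x)$ has a H\"older modulus that itself scales with $h_n$. Balancing $h_n^*$ against $h_n$ so that this error is $o(1)$, while simultaneously forcing $\widetilde\varepsilon_n\to 0$ and keeping the Borel--Cantelli tail probabilities summable, is the delicate part; the window $d_0<1/(2d)$ is precisely what makes these three constraints compatible.
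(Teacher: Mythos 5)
Your proposal follows the same route as the paper's proof: substitute polynomially decaying sequences $\tau_n$, $\varepsilon_n$ into the sandwich bound of Lemma~\ref{lemma::generalizationerrorboundlemma}, use $d_0<\tfrac{1}{2d}$ to make the failure probabilities $2L_n e^{-M_{h_n^*}}$ summable (Borel--Cantelli), and then collapse the inner integral by treating $K_{h_n^*}$ as an approximate identity as $h_n^*\to 0$. The one place you diverge, and it is a genuine improvement, is in handling the $T_1(h_n^*)\sqrt d\,\tau_n$ term: the paper fixes $\tau_n=\tau_0 n^{-d_0(d+1)}$, so that after $n\to\infty$ the denominator still carries a residual $\lambda_0\tau_0$ and a second limit $\tau_0\to 0$ is needed, whereas you choose $\tau_n$ decaying fast enough that $\widetilde\varepsilon_n\to 0$ outright, giving a single-limit argument. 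You are also right to single out the uniform approximate-identity step as the delicate point; the paper's write-up essentially assumes it without controlling the modulus of $x\mapsto K_{h_n}(x-X_m)/\hat f_{n,h_n}(x)$, which scales like a negative power of $h_n$, so a careful version must balance $h_n^*$ against $h_n$ exactly as you indicate. (One small slip: with the paper's unnormalized kernel $K_h(x)=K(x/h)$ the convolution $(K_{h_n^*}\!\ast\hat f_{n,h_n})(x)$ is $(h_n^*)^d\hat f_{n,h_n}(x)+o((h_n^*)^d)$ rather than $\hat f_{n,h_n}(x)+o(1)$, but the same factor $(h_n^*)^d$ appears in the numerator $K_{h_n^*}(x-X_l)$, so it cancels and your conclusion stands.)
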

\begin{proof}
In Lemma~\ref{lemma::generalizationerrorboundlemma}, let $\tau={\tau _n} = {\tau _0}{n^{ - {d_0}\left( {d + 1} \right)}}$, $\varepsilon=\varepsilon_n = {n^{-\varepsilon_0}}$ for ${\tau _0} > 0$ and $\varepsilon_0 < 1-2dd_0$.
Since $h_n^* \ge {n^{ - d_0}}$, $\widetilde \varepsilon  \le {\lambda _0}{\tau _0} + c{d^{{\gamma  \mathord{\left/
 {\vphantom {\gamma  2}} \right.
 \kern-\nulldelimiterspace} 2}}} {\tau_n^{\gamma}}+ \varepsilon_n$ with ${\lambda _0} = \frac{{\sqrt d }}{{{e^{{1 \mathord{\left/
 {\vphantom {1 2}} \right.
 \kern-\nulldelimiterspace} 2}}}{{\left( {2\pi } \right)}^{{d \mathord{\left/
 {\vphantom {d 2}} \right.
 \kern-\nulldelimiterspace} 2}}}}}$. $\tau _0$ is small enough such that ${\lambda _0}{\tau _0} + c{d^{{\gamma  \mathord{\left/
 {\vphantom {\gamma  2}} \right.
 \kern-\nulldelimiterspace} 2}}} {\tau_n^{\gamma}} + \varepsilon_n< {f_{\min }}$ for sufficiently large $N$.

Let $n \to \infty $ in the RHS of (\ref{eq:1nngeneralizationerrorbound}), note that $h_n \to 0$ and $L = \left(\frac{2M_0}{\tau_n}\right)^d$, then with probability $1$,
\begin{align}\label{eq::thm1seg2}
\varlimsup\limits_{n \to \infty }\{{R\left({\hat {NN}_{S_C,h_n^*}}\right)} - \frac{1}{n}\sum\limits_{i \ne j} {\sum\limits_{l = 1}^n{{{\1}_{\{Y_l =j\}}}{\frac{{{\pi^{(i)}}{\hat f_{n,h_n}^{(i)}}\left( X_l \right){K_{{h^*}}}\left( {x - {X_l}} \right)}}{\hat f_{n,h_n}\left(X_l\right)-{{\lambda}_0}{{\tau}_0}}} } } dx \} \le 0
\end{align}
Substitute $\hat f_{n,h_n}^{(i)}$ into (\ref{eq::thm1seg2}),
\begin{align}\label{eq::thm1seg3}
\varlimsup\limits_{n \to \infty }\{{R\left({\hat {NN}_{S_C,h_n^*}}\right)} - \frac{1}{n^2}\sum\limits_{l < m} {{\theta _{lm}}\left({\frac{{{K_{h_n}}\left( {{X_l} - {X_m}} \right)}}{{\hat f}_{n,h_n}\left( X_l \right)-{{\lambda}_0}{{\tau}_0}} + \frac{{{K_{h_n}}\left( {{X_l} - {X_m}} \right)}}{{\hat f}_{n,h_n}\left( X_m \right)-{{\lambda}_0}{{\tau}_0}}}\right)}\} \le 0
\end{align}
Since (\ref{eq::thm1seg3}) holds for arbitrarily small $\tau _0 > 0$, $\varlimsup\limits_{n \to \infty }\{{{{\hat R}_{{NN}_{S_C},{h_n^*}}}} -
\frac{1}{n^2}\sum\limits_{l < m} {{\theta _{lm}}{H_{lm}}}\} \le 0$ with probability $1$. Similarly $\varliminf\limits_{n \to \infty }\{{{{\hat R}_{{NN}_{S_C},{h_n^*}}}} -
\frac{1}{n^2}\sum\limits_{l < m} {{\theta _{lm}}{H_{lm}}}\} \ge 0$ with probability $1$, and (\ref{eq:1nngeneralizationerrorlimitbound}) is verified.
\qed
\end{proof}

By Theorem~\ref{theorem::generalizationerrorboundasymptotictheorem}, the misclassification error $\frac{1}{n^2}\sum\limits_{l < m} {{\theta _{lm}}{H_{lm}}}$ involves only pairwise terms, and $H_{lm}$ can be interpreted as the similarity kernel over $x_l$ and $x_m$ induced by the misclassification error of the unsupervised NN.

\subsection{Unsupervised Classification By Plug-in Classifier}

Next we will derive the misclassification error of the unsupervised plug-in classifier, that is, the classifier with the form
\begin{align}\label{eq::pluginrule}
F_n^{PI} \left( X \right) = \mathop {\arg \max }\limits_{1 \le i \le Q} {{\hat \eta}^{(i)}}\left( X \right)
\end{align}
\noindent where $\hat \eta_n^{(i)}$ is a nonparametric estimator of the regression function $\eta^{(i)}$, and we choose

\begin{align}\label{eq::etaestimator}
&\hat \eta_n^{(i)}\left(x\right) = \frac{\sum\limits_{l = 1}^n {K_{h_n}\left( {x - {X_l}} \right)}{\1}_{\{Y_l=i\}}}{n{\hat f}_n\left(x\right)}
\end{align}

Let $F^*$ be the Bayesian classifier which is a minimizer of the misclassification error of all classifiers, and $F^*\left(X\right)=\mathop {\arg \max }\limits_{1 \le i \le Q} {{\eta}^{(i)}}\left( X \right)$. Due to the almost sure uniform convergence of the kernel density estimator $\hat f_{n,h_n}^{(i)}$ and $\hat f_{n,h_n}$ under the assumption (A1), (B), (C1)-(C2), $\hat \eta_n^{(i)}$ converges almost sure uniformly to $\eta^{(i)}$, and $F_n^{PI}$ converges to the Bayesian classifier $F^*$ : $\mathop {\lim }\limits_{n \to \infty } {F_n^{PI}}=F^*$. It follows that $\mathop {\lim }\limits_{n \to \infty } {R\left(F_n^{PI}\right)}=R\left(F^*\right)$ by the dominant convergence theorem. It is also known that the excess risk of $F_n^{PI}$, namely $\E R\left(F_n^{PI}\right)-R\left(F^*\right)$, converges to $0$ of the order $n^{\frac{-\beta}{2\beta+d}}$ under some complexity assumption on the class of the regression functions $\Sigma$ with smooth parameter $\beta$ that $\{\eta^{(i)}\}$ belongs to \cite{Yang99,Audibert07}. Again, this result deals with the average generalization error and cannot be applied to deriving the data dependent misclassification error of unsupervised classification with fixed training data in our setting.

Similar to Lemma~\ref{lemma::generalizationerrorlemma}, it can be verified that
\begin{align}\label{eq:pluginerror}
&{R\left(F_n^{PI}\right)} = \sum\limits_{i,j = 1,...,Q,i \ne j} {{{\E}_X}\left[ {{\eta^{(i)}}\left( X \right){P\left[ {F_n^{PI}\left( X \right) = j} \right]}} \right]}
\end{align}
We then give the upper bound for the misclassification error of $F_n^{PI}$ in Lemma~\ref{lemma::pluginerrorboundlemma}.
\begin{MyLemma}\label{lemma::pluginerrorboundlemma}
Under the assumption (A1), (B), (C1)-(C2), the asymptotic misclassification error of the plug-in classifier $F_n^{PI}$ satisfies
\begin{align}\label{eq:pluginerrorbound}
&{R\left(F_n^{PI}\right)} \le  {R_n^{PI}} + \cO\left(h_n^{\gamma}\right) \\ &{R_n^{PI}} \triangleq 2\sum\limits_{i,j = 1,...,Q,i \ne j} {{{\E}_X}\left[ {{\hat \eta_n^{(i)}}\left( X \right){\hat \eta_n^{(j)}}\left( X \right)} \right]}
\end{align}
\noindent where $\left\{ {{\eta^{(i)}}} \right\}_{i = 1}^Q$ is the regression functions, and this bound is tight.
\end{MyLemma}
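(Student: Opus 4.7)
The plan is to pass from the true regression functions $\eta^{(i)}$ to their kernel estimates $\hat\eta_n^{(i)}$ at cost $\cO(h_n^\gamma)$, and then bound the $0$--$1$ loss of the plug-in rule pointwise by the quadratic functional $1-\sum_i(\hat\eta_n^{(i)})^2$, which is algebraically identical to the pairwise sum in the statement. First I would note that $\hat\eta_n^{(i)}(x)=\pi^{(i)}\hat f_{n,h_n}^{(i)}(x)/\hat f_{n,h_n}(x)$ and $\eta^{(i)}(x)=\pi^{(i)}f^{(i)}(x)/f(x)$; combining Theorem~\ref{theorem::kdeconvergence} applied to both the class-conditional KDE~(\ref{eq::fjkde}) and the marginal KDE~(\ref{eq:kde}) with (A1) (which keeps the denominator uniformly bounded below by $f_{\min}/2$ eventually) yields $\|\hat\eta_n^{(i)}-\eta^{(i)}\|_\infty=\cO(h_n^\gamma)$ almost surely. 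Plugging this into the representation (\ref{eq:pluginerror}) and using $\1_{\{\cdot\}}\le 1$ produces
\[
R(F_n^{PI})\;\le\;\sum_{i\ne j}\E_X\!\left[\hat\eta_n^{(i)}(X)\,\1_{\{F_n^{PI}(X)=j\}}\right]+\cO(h_n^\gamma).
\]

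Next I would simplify the inner sum pointwise. For each fixed $X$ the set $\{j:F_n^{PI}(X)=j\}$ is the singleton $\{j^\ast\}$ with $j^\ast=\arg\max_i\hat\eta_n^{(i)}(X)$, and from (\ref{eq::etaestimator}) one reads off $\sum_i\hat\eta_n^{(i)}(X)=1$. Hence the integrand collapses to $1-\max_i\hat\eta_n^{(i)}(X)$, and the heart of the argument becomes the deterministic inequality
\[
1-\max_i a_i\;\le\;2\Bigl(1-\sum_i a_i^2\Bigr),\qquad a_i\ge 0,\ \sum_i a_i=1.
\]
I would prove it in one line via $\sum_i a_i^2\le(\max_i a_i)\sum_i a_i=\max_i a_i$, giving $1-\sum_i a_i^2\ge 1-\max_i a_i\ge 0$ and the factor of $2$ trivially. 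Setting $a_i=\hat\eta_n^{(i)}(X)$ and using the identity $1-\sum_i a_i^2=\sum_i a_i(1-a_i)=\sum_{i\ne j}a_ia_j$ converts the quadratic functional into the ordered pairwise sum $\sum_{i\ne j}\hat\eta_n^{(i)}(X)\hat\eta_n^{(j)}(X)$. Taking $\E_X$ and combining with the first display yields $R(F_n^{PI})\le R_n^{PI}+\cO(h_n^\gamma)$.

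For the final tightness claim I would observe that the key inequality $\sum_i a_i^2\le\max_i a_i$ becomes an equality precisely when $a$ is concentrated on a single index, and in that regime $1-\max_i a_i$ and $1-\sum_i a_i^2$ vanish at the same rate, so the multiplicative factor $2$ cannot be improved within this style of argument. The main obstacle I anticipate is not the pointwise inequality, which is elementary, but the careful bookkeeping in the first step: propagating the uniform $\cO(h_n^\gamma)$ rate through the ratio $\hat f_{n,h_n}^{(i)}/\hat f_{n,h_n}$ and verifying that the lower bound on $\hat f_{n,h_n}$ holds uniformly on $\cX$ with probability one for all sufficiently large $n$ under assumption (B).
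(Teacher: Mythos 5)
Your proof is correct and takes essentially the same route as the paper; the paper phrases the key step in terms of the decision regions $\cR_i$ and the identity $\sum_i\E_{X\in\cR_i}[\hat\eta_n^{(i)}\sum_k\hat\eta_n^{(k)}]\ge\sum_i\E_X[(\hat\eta_n^{(i)})^2]$, which is exactly your pointwise bound $\max_i a_i\ge\sum_i a_i^2$ integrated against $P_X$. The first step (propagating $\|\hat\eta_n^{(i)}-\eta^{(i)}\|_\infty=\cO(h_n^\gamma)$ into the integrand) is also identical to the paper's use of (\ref{eq:etaerror}).

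One point deserves correction. Your own argument actually establishes the \emph{stronger} bound $R(F_n^{PI})\le\sum_{i\ne j}\E_X[\hat\eta_n^{(i)}\hat\eta_n^{(j)}]+\cO(h_n^\gamma)=\tfrac{1}{2}R_n^{PI}+\cO(h_n^\gamma)$, so the factor of $2$ you tack on ``trivially'' is not just crude --- it is unnecessary, and your closing claim that ``the multiplicative factor $2$ cannot be improved within this style of argument'' is wrong. The best constant $c$ for which $1-\max_i a_i\le c\left(1-\sum_i a_i^2\right)$ holds on the simplex is $c=1$: indeed $\sum_i a_i^2\le\max_i a_i$ always, and at the uniform point $a_i=1/Q$ both sides of $1-\max_i a_i\le 1-\sum_i a_i^2$ equal $(Q-1)/Q$. (Your remark that equality in $\sum_i a_i^2\le\max_i a_i$ occurs ``precisely when $a$ is concentrated on a single index'' is also inaccurate --- it holds exactly when all nonzero $a_i$ are equal, including the uniform case.) It is worth noting that the paper's own proof contains the same factor-of-$2$ slip: the final identity $\E_X[(\sum_k\hat\eta_n^{(k)})^2]-\sum_i\E_X[(\hat\eta_n^{(i)})^2]$ equals $\sum_{i\ne j}\E_X[\hat\eta_n^{(i)}\hat\eta_n^{(j)}]$ when the sum ranges over ordered pairs (as it does throughout the paper, cf.\ (\ref{eq:pluginerror})), not $2\sum_{i\ne j}$, so the paper's stated bound $R_n^{PI}$ is valid but not tight at $\hat\eta_n^{(i)}\equiv 1/Q$, contrary to its claim.
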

\begin{proof}
By Theorem~\ref{theorem::kdeconvergence}, for $1 \le i \le Q$
\begin{align}\label{eq:etaerror}
&\left\|\hat \eta_n^{(i)}-\eta^{(i)}\right\|_{\infty} = \cO\left(h_n^{\gamma}\right)
\end{align}
According to (\ref{eq:pluginerror}), ${R\left(F_n^{PI}\right)} = \sum\limits_{i,j = 1,...,Q,i \ne j} {{{\E}_X}\left[ {{\hat \eta_n^{(i)}}\left( X \right){P\left[ {F_n^{PI}\left( X \right) = j} \right]}} \right]} + \cO\left(h_n^{\gamma}\right)$.

Suppose the decision regions of $F_n^{PI}$ is $\{\cR_1,\cR_2,\ldots \cR_Q\}$, then on each $\cR_i$, $\hat \eta_n^{(i)} \ge \hat \eta_n^{(i^{'})}$ for any $i^{'} \ne i$, and
\begin{align}\label{eq:bayeserrorboundlemmaseg1}
&\sum\limits_{i,j = 1,...,Q,i \ne j} {{{\E}_X}\left[ {{\hat \eta_n^{(i)}}\left( X \right){P\left[ {F_n^{PI}\left( X \right) = j} \right]}} \right]} \\ \nonumber
&= \sum\limits_{i,j = 1,...,Q,i \ne j} {{{\E}_{X \in \cR_j}}\left[ {{\hat \eta_n^{(i)}}\left( X \right)}\cdot{\sum\limits_{k=1}^Q{\hat \eta_n^{(k)}\left(X\right)}} \right]}\\ \nonumber
&={{\E}_{X}\left[{\left(\sum\limits_{k=1}^Q{\hat \eta_n^{(k)}\left(X\right)}\right)^2}\right]}
-\sum\limits_{i=1}^Q{{\E}_{X \in \cR_i}\left[{{\hat \eta_n^{(i)}\left(X\right)\cdot}\sum\limits_{k=1}^Q{\hat \eta_n^{(k)}\left(X\right)}}\right]} \\ \nonumber
&\le {{\E}_{X}\left[{\left(\sum\limits_{k=1}^Q{\hat \eta_n^{(k)}\left(X\right)}\right)^2}\right]}
-\sum\limits_{i=1}^Q{{\E}_{X}\left[{{\left(\hat \eta_n^{(i)}\left(X\right)\right)^2}}\right]} \\\ \nonumber
&=2\sum\limits_{i,j = 1,...,Q,i \ne j} {{{\E}_X}\left[ {{\hat \eta_n^{(i)}}\left( X \right){\hat \eta_n^{(j)}}\left( X \right)} \right]}
\end{align}
So that (\ref{eq:pluginerrorbound}) is verified. Since the equality in (\ref{eq:bayeserrorboundlemmaseg1}) holds when ${\hat \eta_n^{(i)}}\equiv \frac{1}{Q}$ for $1 \le i \le Q$, the upper bound in (\ref{eq:pluginerrorbound}) is tight.
\qed
\end{proof}

Based on Lemma~\ref{lemma::pluginerrorboundlemma}, we can bound the error of the plug-in classifier from above by $R_n^{PI}$. In order to estimate the error bound $R_n^{PI}$, we introduce the following generalized kernel density estimator:

\begin{MyLemma}\label{lemma::gkdelemma}
Suppose $f$ is a probabilistic density function on $\cX \subset {\left[ { - {M_0},{M_0}} \right]^d}$ that satisfies assumption (A1)-(A2). $\left\{ {{X_l}} \right\}_{l = 1}^N$ are drawn i.i.d. according to $f$. Let $g$ be a  H\"{o}lder-$\gamma$ smooth continuous function defined on $\cal X$ with H\"{o}lder constant $g_0$, and $g$ is bounded, i.e. $0 < g_{min}\le g \le g_{max} $. Let $e = \frac{f}{g}$. Define the generalized kernel density estimator of $e$ as
\begin{align}\label{eq:gkde}
{\hat e_{n,h}} \triangleq \frac{1}{n}\sum\limits_{l = 1}^n {\frac{{K_{h}\left( {{x - {X_l}}} \right)}}{{g\left( {{X_l}} \right)}}}
\end{align}
When the kernel bandwidth sequence $\{h_n\}_{n=1}^{\infty}$ satisfies assumption (B), then the estimator ${\hat e_{n,h_n}}$ converges to $e$ almost sure uniformly, i.e. with probability $1$,
\begin{align}\label{eq:gkdeconvergence}
&\mathop {\lim }\limits_{n \to \infty } {h_n^{-\gamma}}\left\| {{{\hat e}_{n,h_n}}\left( x \right) - e\left( x \right)} \right\|_{\infty} = \cC_{K,f,g}
\end{align}
\noindent where $\cC_{K,f,g}=\frac{f_{max}+g_{max}}{g_{min}^2}\int_{\cX}{\left\|x\right\|^{\gamma}K\left(x\right)} dx$.
\end{MyLemma}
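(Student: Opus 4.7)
The plan is to mirror the proof of Lemma~\ref{lemma::generalizationerrorboundlemma} and the strategy behind Theorem~\ref{theorem::kdeconvergence}, suitably adapted to incorporate the weights $1/g(X_l)$ in the generalized kernel density estimator. I will split the error into the usual bias and stochastic (variance) components and show the bias is of order $h_n^\gamma$ with the advertised constant while the stochastic fluctuation is $o(h_n^\gamma)$ almost surely under assumption (B).

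First I would compute the expectation and observe that $\hat e_{n,h}$ is unbiased for a kernel-smoothed version of $e$:
\begin{align*}
\E\bigl[\hat e_{n,h}(x)\bigr] = \int_{\cX} \frac{K_h(x-y)}{g(y)}\, f(y)\,dy = \int_{\cX} K_h(x-y)\, e(y)\,dy.
\end{align*}
Since $g$ is H\"older-$\gamma$ and bounded below by $g_{min}$, the quotient $e = f/g$ is H\"older-$\gamma$: from the identity $e(x)-e(y) = \bigl(f(x)-f(y)\bigr)/g(y) + f(x)\bigl(g(y)-g(x)\bigr)/\bigl(g(x)g(y)\bigr)$ and the assumptions (A1)--(A2) on $f$, the combined H\"older constant is controlled by $(f_{max}+g_{max})/g_{min}^2$ up to the H\"older constants of $f$ and $g$. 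A change of variables $u=(x-y)/h$ and the H\"older bound then give $\sup_x\bigl|\E[\hat e_{n,h}(x)]-e(x)\bigr| \le \cC_{K,f,g}\, h^{\gamma} + o(h^{\gamma})$ with the stated constant.

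Next I would control the stochastic term $\hat e_{n,h}(x) - \E[\hat e_{n,h}(x)]$ uniformly over $\cX$ using the same $\tau$-cover device as in Lemma~\ref{lemma::generalizationerrorboundlemma}. Pick a $\tau$-cover $\{P_r\}_{r=1}^L$ with representatives $\widehat x_r\in P_r$; apply Hoeffding's inequality to $\hat e_{n,h}(\widehat x_r)$ at each $\widehat x_r$ (the summands $K_h(x-X_l)/g(X_l)$ are bounded by $K(0)/(h^d g_{min})$); and union-bound over the $L=(2M_0/\tau)^d$ cells. The Lipschitz constants of both $x\mapsto\hat e_{n,h}(x)$ and $x\mapsto\E[\hat e_{n,h}(x)]$ grow only like $1/h^{d+1}$, so choosing $\tau=\tau_n$ polynomially small and $\varepsilon=\varepsilon_n=n^{-\varepsilon_0}$ exactly as in the proof of Theorem~\ref{theorem::generalizationerrorboundasymptotictheorem}, the Borel--Cantelli lemma combined with assumption (B) yields $\sup_{x\in\cX}\bigl|\hat e_{n,h_n}(x)-\E[\hat e_{n,h_n}(x)]\bigr| = o(h_n^{\gamma})$ almost surely. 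Combining this with the bias estimate gives the claimed almost-sure uniform rate.

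The main obstacle, as I see it, is not conceptual but bookkeeping: tracking exactly how the multiplicative factor $1/g$ propagates through the H\"older constant of $e$ and through the envelope used in the Hoeffding and Lipschitz steps, so that the final prefactor matches $\bigl(f_{max}+g_{max}\bigr)/g_{min}^2 \cdot \int_{\cX}\|x\|^{\gamma} K(x)\,dx$ rather than some looser expression involving the H\"older constants of both $f$ and $g$ separately. Once that is handled, every other ingredient is an already-established consequence of Theorem~\ref{theorem::kdeconvergence} and the covering argument of Lemma~\ref{lemma::generalizationerrorboundlemma}.
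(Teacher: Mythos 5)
Your bias--variance decomposition is the right way to think about the statement, and your observation that $e=f/g$ inherits H\"older-$\gamma$ smoothness from $f$ and $g$ is correct. But the route you propose for the stochastic term differs from the paper's and, as written, does not close under assumption (B). The paper does not redo a covering argument at all: it shows that the weighted class $\cF_g=\{K((t-\cdot)/h)/g(\cdot)\}$ has the same covering-number bound $N(\cF_g,\|\cdot\|_{L_2(P)},\tau\|F_g\|_{L_2(P)})\le(A/\tau)^v$ as the unweighted class $\cF$ (with envelope $F_g=F/g_{\min}$), hence is a bounded VC class, and then cites the proof of Theorem~\ref{theorem::kdeconvergence} (Gin\'e--Guillou, Theorem~2.3) verbatim. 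That theorem's internal machinery is Talagrand's concentration inequality for suprema of empirical processes over VC classes, which exploits the second-moment bound $\mathrm{Var}\bigl(K_h(x-X)/g(X)\bigr)=O(h^{-d})$ rather than the squared envelope $\|K_h/g\|_\infty^2=O(h^{-2d})$.

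Your proposal uses Hoeffding with the envelope bound, which puts $n h^{2d}\varepsilon^2$ in the exponent. Taking $\varepsilon_n=\delta h_n^\gamma$ and union-bounding over a $\tau_n$-cover (with $\tau_n$ small enough that the $h_n^{-(d+1)}$ Lipschitz fluctuation is also $o(h_n^\gamma)$, so $\log L_n=O(\log(1/h_n))$), summability then requires $n h_n^{2d+2\gamma}/\log(1/h_n)\to\infty$. Assumption (B) only guarantees the strictly weaker $n h_n^{d+2\gamma}/\log(1/h_n)\to\infty$; for any $d>0$ the gap is a full factor of $h_n^d$, and bandwidths such as $h_n=n^{-\alpha}$ with $1/(2d+2\gamma)<\alpha<1/(d+2\gamma)$ satisfy (B) but defeat your Borel--Cantelli step. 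To repair it you would need Bernstein's inequality (exploiting the $O(h^{-d})$ variance) plus a chaining/peeling argument to remove the cost of the $\tau$-cover --- which is, in effect, rebuilding the Talagrand--VC machinery that the paper simply imports by checking the covering-number condition for $\cF_g$. The paper's route is both shorter and is the only one of the two that actually achieves the sharp $h_n^\gamma$ rate under (B).
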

\begin{proof}
Define the class of functions on the measurable space $\left( \rm H, \cH\right)$: $$\cF \triangleq \{K\left(\frac{t-\cdot}{h}\right),t \in \R^d, h \ne 0\} \quad \cF_g \triangleq \{\frac{K\left(\frac{t-\cdot}{h}\right)}{g\left(\cdot\right)},t \in \R^d, h \ne 0\}$$
It is shown in \cite{Gine02,Dudley99} that $\cF$ is a bounded VC class of measurable functions with respect to the envelope $F$ such that $\left|u\right| \le F$ for any $u \in \cF$. Therefore, there exist positive numbers $A$ and $v$ such that for every
probability measure $P$ on $\left( H, \cH\right)$ for which $\int_{}{F^2} d{P} < \infty$ and any $0 < \tau < 1$,
\begin{align}\label{eq:taucovering}
N\left(\cF,\left\|\cdot\right\|_{L_2\left(P\right)},\tau \left\|F\right\|_{L_2\left(P\right)}\right) \le {\left(\frac{A}{\tau}\right)}^{v}
\end{align}
\noindent where $N\left(\cT,\hat d,\epsilon \right)$ is defined as the minimal number of open $\hat d$-balls of radius $\epsilon$ and centers in $\cT$ required to cover $\cT$. For any $t_1$ and $t_2$, $\left\|\frac{K\left(\frac{t_1-\cdot}{h}\right)}{g\left(\cdot\right)}-
\frac{K\left(\frac{t_2-\cdot}{h}\right)}{g\left(\cdot\right)}\right\|_{L_2\left(P\right)} \le \frac{1}{g_{min}}\left\|K\left(\frac{t_1-\cdot}{h}\right)-
K\left(\frac{t_2-\cdot}{h}\right)\right\|_{L_2\left(P\right)}$. Let  $B_{\cF}\left(t_0,\delta \right) \triangleq \{t:\left\|K\left(\frac{t-\cdot}{h}\right)-K\left(\frac{t_0-\cdot}{h}\right)\right\|_{L_2\left(P\right)} \le \delta \}$, and $B_{\cF_g}\left(t_0,\delta \right) \triangleq \{t:\left\|\frac{K\left(\frac{t-\cdot}{h}\right)}{g\left(\cdot\right)}-\frac{K\left(\frac{t_0-\cdot}{h}\right)}{g\left(\cdot\right)}\right\|_{L_2\left(P\right)} \le \delta \}$. Then $B_{\cF}\left(t_0,\delta \right) \subseteq B_{\cF_g}\left(t_0,\frac{\delta}{g_{min}} \right)$.

We choose the envelope function for $\cF_g$ as $F_g=\frac{F}{g_{min}}$ and $\left|u_g\right| \le F_g$ for any $u_g \in \cF_g$, then
\begin{align}\label{eq:taucovering1}
N\left(\cF_g,\left\|\cdot\right\|_{L_2\left(P\right)},\tau \left\|F_g\right\|_{L_2\left(P\right)}\right) \le {\left(\frac{A}{\tau}\right)}^{v}
\end{align}
\noindent So that $\cF_g$ is also a bounded VC class. By similar arguement in the proof of Theorem~\ref{theorem::kdeconvergence}, we can obtain (\ref{eq:gkdeconvergence}).
\qed
\end{proof}

\begin{MyTheorem}\label{theorem::pluginerrortheorem1}
Under the assumption (A1), (B), (C1)-(C2), the error of the plug-in classifier $F_n^{PI}$ satisfies
\begin{align}\label{eq:pluginerrorboundG}
&R\left(F_n^{PI}\right) \le \frac{2}{n^2}\sum\limits_{l,m} {{\theta _{lm}}{G_{lm,h_n}}}+{\cO}\left({h_n^{\gamma}}\right)
\end{align}
\noindent where ${{\theta _{lm}}} = {\1}_{\{Y_l \ne Y_m\}}$ is a class indicator function and
\begin{align}\label{eq:Glm}
&{G_{lm,h_n}} = G_{h_n}\left(X_l,X_m\right), \; G_{h}\left(x,y\right) = \frac{{{K_{{h}}}\left( {{x} - {y}} \right)}}{{\hat f_{n,h}^{\frac{1}{2}}\left(x\right)} {\hat f_{n,h}^{\frac{1}{2}}\left(y\right)}}
\end{align}
\noindent for any kernel bandwidth sequence
$\{h_n\}_{n=1}^{\infty}$ that satisfies assumption (B).
\end{MyTheorem}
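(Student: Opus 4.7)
My plan is to start from the bound $R(F_n^{PI}) \le R_n^{PI} + \cO(h_n^\gamma)$ of Lemma~\ref{lemma::pluginerrorboundlemma} and convert the quadratic functional
\begin{equation*}
R_n^{PI} = 2\sum_{i\ne j} \E_X\bigl[\hat\eta_n^{(i)}(X)\hat\eta_n^{(j)}(X)\bigr]
\end{equation*}
into an explicit pairwise sum indexed by the sample. Substituting the estimator (\ref{eq::etaestimator}) into each factor, exchanging expectation with the double sum, and collapsing $\sum_{i\ne j}\1_{\{Y_l=i\}}\1_{\{Y_m=j\}}=\theta_{lm}$, the problem reduces to evaluating
\begin{equation*}
A_{lm} \triangleq \E_X\!\left[\frac{K_{h_n}(X-X_l)\,K_{h_n}(X-X_m)}{\hat f_{n,h_n}(X)^2}\right]
\end{equation*}
and showing that $A_{lm} = G_{lm,h_n}+\cO(h_n^\gamma)$ uniformly in the pair $(l,m)$.

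The algebraic core is the Gaussian product identity. Completing the square in the exponents gives, with $c_{lm}\triangleq(X_l+X_m)/2$,
\begin{equation*}
K_{h_n}(x-X_l)\,K_{h_n}(x-X_m) = K_{h_n\sqrt 2}(X_l-X_m)\,K_{h_n/\sqrt 2}(x-c_{lm}),
\end{equation*}
so the $X$-independent factor pulls outside the expectation. For the remaining integral I exploit that $\hat f_{n,h_n}\to f$ uniformly at rate $\cO(h_n^\gamma)$ by Theorem~\ref{theorem::kdeconvergence} and that $f\ge f_{\min}>0$, which yields $f(x)/\hat f_{n,h_n}(x)^2 = 1/f(x)+\cO(h_n^\gamma)$ uniformly. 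A standard kernel bias estimate using the H\"older-$\gamma$ continuity of $1/f$ (inherited from (A1)--(A2)) then gives $\int K_{h_n/\sqrt 2}(x-c_{lm})/f(x)\,dx = 1/f(c_{lm})+\cO(h_n^\gamma)$, and a further application of Theorem~\ref{theorem::kdeconvergence} replaces $1/f(c_{lm})$ by $1/\hat f_{n,h_n}(c_{lm})$ at the same cost.

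To convert $\hat f_{n,h_n}(c_{lm})^{-1}$ into the symmetric form $(\hat f_{n,h_n}^{1/2}(X_l)\hat f_{n,h_n}^{1/2}(X_m))^{-1}$ required by $G_{lm,h_n}$, I use that $\hat f_{n,h_n}$ is asymptotically H\"older-$\gamma$ (again via Theorem~\ref{theorem::kdeconvergence}), so $|\hat f_{n,h_n}(c_{lm})-\hat f_{n,h_n}(X_l)|=\cO(\|X_l-X_m\|^\gamma)$ and similarly for $X_m$. A Taylor expansion of $y\mapsto y^{1/2}$ around $\hat f_{n,h_n}(c_{lm})$, combined with $\hat f_{n,h_n}\ge f_{\min}/2$ for large $n$, yields $(\hat f_{n,h_n}^{1/2}(X_l)\hat f_{n,h_n}^{1/2}(X_m))^{-1} = \hat f_{n,h_n}(c_{lm})^{-1}+\cO(\|X_l-X_m\|^\gamma)$. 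The Gaussian tail of $K_{h_n\sqrt 2}(X_l-X_m)$ forces non-negligible contributions to be restricted to $\|X_l-X_m\|=\cO(h_n)$, so after multiplying by this factor and summing with the $1/n^2$ normalization the remainder remains $\cO(h_n^\gamma)$. Finally, since the statement is quantified over any bandwidth satisfying~(B) and $\{h_n\sqrt 2\}$ satisfies~(B) whenever $\{h_n\}$ does, the harmless $\sqrt 2$ rescaling is absorbed into the bandwidth sequence, delivering the claimed bound.

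The main obstacle will be establishing uniformity of every $\cO(h_n^\gamma)$ estimate across the $n^2$ pairs $(l,m)$: the inner kernel-smoothing integral depends on the moving center $c_{lm}$, and the discrepancy between $1/\hat f_{n,h_n}(c_{lm})$ and the geometric-mean form grows with $\|X_l-X_m\|^\gamma$. The delicate point is to combine the exponential Gaussian decay of $K_{h_n\sqrt 2}(X_l-X_m)$ with the H\"older bound to ensure the local errors do not accumulate beyond $\cO(h_n^\gamma)$ after aggregation; once this uniformity is secured via Theorem~\ref{theorem::kdeconvergence} and assumption (A2), the remaining manipulations are routine.
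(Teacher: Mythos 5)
Your proof is correct in outline but takes a genuinely different route from the paper's. The paper first passes from $\hat\eta^{(i)}$ to the \emph{true} regression functions $\eta^{(i)}$ (via the $\cO(h_n^\gamma)$ uniform convergence), rewrites $\E_X[\eta^{(i)}\eta^{(j)}]=\int \tfrac{\pi^{(i)}f^{(i)}}{f^{1/2}}\cdot\tfrac{\pi^{(j)}f^{(j)}}{f^{1/2}}\,dx$, and then introduces a \emph{fresh} generalized kernel estimator $\tilde\eta_n^{(i)}(x)=\tfrac{1}{n}\sum_l K_{\tilde h_n}(x-X_l)\1_{\{Y_l=i\}}/f^{1/2}(X_l)$ whose denominator is already evaluated at the sample points; Lemma~\ref{lemma::gkdelemma} (the generalized KDE convergence result) is invoked precisely so that this estimator converges. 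The Gaussian convolution $\int K_{\tilde h}(x-X_l)K_{\tilde h}(x-X_m)\,dx = K_{\sqrt{2}\tilde h}(X_l-X_m)$ then automatically produces the symmetric $f^{1/2}(X_l)f^{1/2}(X_m)$ denominator, and a last uniform replacement gives $\hat f_{n,h_n}^{1/2}$. You instead keep $\hat\eta^{(i)}$ (so the denominator is $\hat f_{n,h_n}(x)^2$, evaluated at the integration variable), use the Gaussian \emph{product} identity to pull $K_{h_n\sqrt2}(X_l-X_m)$ out of the expectation, and then argue via kernel bias plus $\hat f \to f$ uniformly that the residual integral is $\approx \hat f_{n,h_n}(c_{lm})^{-1}$, finally converting to the geometric mean $\hat f^{1/2}(X_l)^{-1}\hat f^{1/2}(X_m)^{-1}$ by Hölder continuity. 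This sidesteps Lemma~\ref{lemma::gkdelemma} entirely, at the cost of the delicate last step, which you correctly identify as the main obstacle. Two small precision points worth noting: (i) the error in the step $\hat f_{n,h_n}(c_{lm})^{-1}=\hat f^{1/2}_{n,h_n}(X_l)^{-1}\hat f^{1/2}_{n,h_n}(X_m)^{-1}+\cdots$ is of size $\cO(\|X_l-X_m\|^\gamma)+\cO(h_n^\gamma)$, not just $\cO(\|X_l-X_m\|^\gamma)$ --- $\hat f_{n,h_n}$ itself is not uniformly Hölder-$\gamma$ with a constant independent of $n$ (its modulus of continuity degrades like $h_n^{-1}$), so the bound must be routed through $f$ using the uniform proximity $\|\hat f_{n,h_n}-f\|_\infty=\cO(h_n^\gamma)$; and (ii) the per-pair discrepancy $A_{lm}-G_{lm}$ is actually of order $h_n^{\gamma-d}$ (not $h_n^\gamma$), and only becomes $\cO(h_n^\gamma)$ after the $\tfrac{1}{n^2}\sum_{l,m}$ aggregation, where the Gaussian tail localizes the effective pairs to a fraction $\approx h_n^d$ of all $(l,m)$; your final paragraph gestures at exactly this, and it is the point that must be made rigorous. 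Finally, as you note, the $\sqrt{2}$ rescaling of the bandwidth is harmless, but you should also absorb the mismatch between $\hat f_{n,h_n}$ and $\hat f_{n,h_n\sqrt2}$ in the definition of $G_{lm}$ by a second application of Theorem~\ref{theorem::kdeconvergence}.
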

\begin{proof}\label{eq:bayeserrortheorem1seg1}
For any $1 \le i,j \le Q$, by (\ref{eq:etaerror})
\begin{align}\label{eq:etahat2eta}
{{{\E}_X}\left[ {{\hat \eta_n^{(i)}}\left( X \right){\hat \eta_n^{(j)}}\left( X \right)} \right]} \le  {{{\E}_X}\left[ {{\eta^{(i)}}\left( X \right){\eta^{(j)}}\left( X \right)} \right]} + \cO\left(h_n^{\gamma}\right)
\end{align}
Since
$${\E}_{X}\left[\eta^{(i)}\left(X\right)\eta^{(j)}\left(X\right)\right]=\int_{\cX} {\frac{\pi^{(i)}f^{(i)}\left(x\right)}{f^{\frac{1}{2}}\left(x\right)}\cdot
\frac{\pi^{(j)}f^{(j)}\left(x\right)}{f^{\frac{1}{2}}\left(x\right)}} dx,$$

$f$, $f^{(i)}$, and $f^{(j)}$ are H\"{o}lder-$\gamma$ smooth, and $f^{\frac{1}{2}}$ is also H\"{o}lder-$\gamma$ smooth. For any sequence $\{\tilde h_n\}_{n=1}^{\infty}$ that satisfies assumption (B), we obtain the kernel estimator $\tilde \eta_n^{(i)}$ of $\frac{\pi^{(i)}f^{(i)}\left(x\right)}{f^{\frac{1}{2}}\left(x\right)}$ using the generalized kernel density estimator (\ref{eq:gkde}):
\begin{align}\label{eq:bayeserrortheorem1seg2}
&\tilde \eta_n^{(i)}\left(x\right) = \frac{1}{n}\sum\limits_{l=1}^n \frac{K_{\tilde h_n}\left(x-X_l\right){\1}_{\{Y_l = i\}}}{f^{\frac{1}{2}}\left(X_l\right)}
\end{align}
By Lemma~\ref{lemma::gkdelemma}, $\left\|\tilde \eta^{(i)}-\frac{\pi^{(i)}f^{(i)}\left(x\right)}{f^{\frac{1}{2}}\left(x\right)}\right\|_{\infty} = \cO\left(h_n^{\gamma}\right)$ for $1 \le i \le Q$.\footnote{It can be verifed by applying Lemma~\ref{lemma::gkdelemma} to $\{X_l,Y_l\}_{l=1}^n$.} It follows that
\begin{align}\label{eq:bayeserrortheorem1seg3}
&{\E}_{X}\left[\eta^{(i)}\left(X\right)\eta^{(j)}\left(X\right)\right] \le {{\E}_{X}\left[\tilde \eta_n^{(i)}\left(X\right){\tilde \eta}_n^{(j)}\left(X\right)\right]}+\cO\left(h_n^{\gamma}\right)
\end{align}
Also,
\begin{align*}
&\sum\limits_{i,j = 1,...,Q,i \ne j}{{\E}_{X}\left[\tilde \eta_n^{(i)}\left(X\right){\tilde \eta}_n^{(j)}\left(X\right)\right]}\\ \nonumber
&=\frac{1}{n^2}\sum\limits_{l,m}{\frac{\int_{\cX}{K_{\tilde h_n}\left(x-X_l\right)K_{\tilde h_n}\left(x-X_m\right)}}{f^{\frac{1}{2}}\left(X_l\right)f^{\frac{1}{2}}\left(X_m\right)} \sum\limits_{i,j = 1,...,Q,i \ne j}{{\1}_{\{Y_l=i\}}{\1}_{\{Y_m=j\}}}} \\ \nonumber
&=\frac{1}{n^2}\sum\limits_{l,m}{\frac{K_{{\sqrt 2}\tilde h_n}\left(X_l-X_m\right)}{f^{\frac{1}{2}}\left(X_l\right)f^{\frac{1}{2}}\left(X_m\right)} \theta_{lm}}
\end{align*}
\noindent and we obtain last equality by convolution of two Gaussian kernels. Let $h_n = {\sqrt 2}\tilde h_n$, then $h_n$ satisfies assumption (B). The kernel density estimator ${{\hat f}_{n,h_n}}$ satisfies $\left\| {{{\hat f}_{n,h_n}} - f} \right\|_{\infty} = {\cO}\left({h_n^{\gamma}}\right)$ by Theorem~\ref{theorem::kdeconvergence}. Therefore, with $n$ large enough,
\begin{align}\label{eq:f2fhat}
&\left|{\sum\limits_{i,j = 1,...,Q,i \ne j}{{\E}_{X}\left[\tilde \eta_n^{(i)}\left(X\right){\tilde \eta}_n^{(j)}\left(X\right)\right]}-\frac{1}{n^2}\sum\limits_{l,m}{G_{lm,h_n} \theta_{lm}}}\right| \\ \nonumber
&= \frac{1}{n^2}\left|\sum\limits_{l,m}{\frac{K_{ h_n}\left(X_l-X_m\right)}{f^{\frac{1}{2}}\left(X_l\right)f^{\frac{1}{2}}\left(X_m\right)} \theta_{lm}}-\sum\limits_{l,m}{\frac{K_{h_n}\left(X_l-X_m\right)}{\hat f_{n,h_n}^{\frac{1}{2}}\left(X_l\right)\hat f_{n,h_n}^{\frac{1}{2}}\left(X_m\right)} \theta_{lm}}\right| \\ \nonumber
&\le \frac{1}{n^2}\sum\limits_{l,m}{K_{ h_n}\left(X_l-X_m\right)\frac{\left|f^{\frac{1}{2}}\left(X_l\right)f^{\frac{1}{2}}\left(X_m\right)-\hat f_{n,h_n}^{\frac{1}{2}}\left(X_l\right)\hat f_{n,h_n}^{\frac{1}{2}}\left(X_m\right)\right|}{\hat f_{n,h_n}^{\frac{1}{2}}\left(X_l\right)\hat f_{n,h_n}^{\frac{1}{2}}\left(X_m\right)f^{\frac{1}{2}}\left(X_l\right)f^{\frac{1}{2}}\left(X_m\right)}} \\ \nonumber
&\le \frac{1}{n^2}\sum\limits_{l,m}{K_{ h_n}\left(X_l-X_m\right)\frac{f_{max}^{\frac{1}{2}}{\mathcal O}\left({h_n^{\gamma}}\right)}{2f_{min}^{\frac{9}{2}}}} \\ \nonumber
&=\frac{f_{max}^{\frac{1}{2}}{\mathcal O}\left({h_n^{\gamma}}\right)}{2f_{min}^{\frac{9}{2}}}\cdot \frac{1}{n} \sum\limits_{l=1}^n {\hat f_{n,h_n}\left(X_l\right)} = {{\cO}\left({h_n^{\gamma}}\right)} \nonumber
\end{align}
\noindent It follows from (\ref{eq:etahat2eta}), (\ref{eq:bayeserrortheorem1seg3}) and (\ref{eq:f2fhat}) that
\begin{align*}
&R_n^{PI} = 2\sum\limits_{i,j = 1,...,Q,i \ne j}{{{\E}_X}\left[ {{\hat \eta_n^{(i)}}\left( X \right){\hat \eta_n^{(j)}}\left( X \right)} \right]}
=
\frac{2}{n^2}\sum\limits_{l,m}{{G_{lm,h_n}}{\theta_{lm}}}+{\cO}\left({h_n^{\gamma}}\right)
\end{align*}
\noindent and (\ref{eq:pluginerrorboundG}) is verified.
\qed
\end{proof}
\begin{MyRemark}\label{remark::pluginerrorremark}
If we further assume that $f^{\alpha}$ is H\"{o}lder-$\gamma$ smooth for some $0 \le \alpha \le 1$, then $G_h\left(x,y\right)$ in (\ref{eq:Glm}) can be
\begin{align}\label{eq:Glmalpha}
&{G_h\left(x,y\right)} = \frac{{{K_{{h}}}\left( {{x} - {y}} \right)}}{{\hat f_{n,h}^{\alpha}\left(x\right)} {\hat f_{n,h}^{1-\alpha}\left(y\right)}}
\end{align}
\end{MyRemark}

Define $V\left(F_n^{PI}\right)$ as the volume of the region in $\cX$ misclassified by $F_n^{PI}$. Using almost the same argument in Theorem~\ref{theorem::pluginerrortheorem1}, we have the tight upper bound for $V\left(F_n^{PI}\right)$:
\begin{MyTheorem}\label{theorem::pluginerrortheorem2}
Under the assumption (A1), (B), (C1)-(C2), the volume of the region in $\cX$ misclassified by $F_n^{PI}$ satisfies
\begin{align}\label{eq:pluginerrorboundV}
&V\left(F_n^{PI}\right) \le \frac{2}{n^2}\sum\limits_{l,m} {{\theta _{lm}}{V_{lm,h_n}}}+{\cO}\left({h_n^{\gamma}}\right)
\end{align}

\noindent where
\begin{align}\label{eq:Vlm}
&{V_{lm,h_n}} = V_{h_n}\left(X_l,X_m\right),\; V_h\left(x,y\right)=\frac{{{K_{{h}}}\left( {{x} - {y}} \right)}}{{\hat f_{n,h}\left(x\right)} {\hat f_{n,h}\left(y\right)}}
\end{align}
\noindent for any bandwidth sequence
$\{h_n\}_{n=1}^{\infty}$ that satisfies assumption (B).
\end{MyTheorem}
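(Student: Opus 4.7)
The plan is to mirror the proof of Theorem~\ref{theorem::pluginerrortheorem1}, adjusting for the fact that $V(F_n^{PI})$ integrates against Lebesgue measure on $\cX$ rather than against $P_X$. Concretely, the approach has two halves: first a volume analog of Lemma~\ref{lemma::pluginerrorboundlemma} that bounds $V(F_n^{PI})$ by $2\int_\cX \sum_{i\ne j} \hat\eta_n^{(i)}(x)\hat\eta_n^{(j)}(x)\,dx$ up to a $\cO(h_n^\gamma)$ slack, and then an application of Lemma~\ref{lemma::gkdelemma} with $g=f$ (instead of $g=f^{1/2}$ as in Theorem~\ref{theorem::pluginerrortheorem1}) so that the denominator of the resulting pairwise kernel ends up being $\hat f_{n,h_n}(X_l)\hat f_{n,h_n}(X_m)$ rather than $\hat f_{n,h_n}^{1/2}(X_l)\hat f_{n,h_n}^{1/2}(X_m)$.

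For the first half, I would start from
\[
V(F_n^{PI}) = \sum_{i \ne j} \int_\cX \eta^{(i)}(x)\,\mathbf{1}_{\{F_n^{PI}(x)=j\}}\,dx,
\]
use $\|\hat\eta_n^{(i)}-\eta^{(i)}\|_\infty = \cO(h_n^\gamma)$ from (\ref{eq:etaerror}) together with the finiteness of $\Leb(\cX)$ (since $\cX$ is bounded) to replace each $\eta^{(i)}$ by $\hat\eta_n^{(i)}$, and then repeat the algebraic manipulation (\ref{eq:bayeserrorboundlemmaseg1}) with $\int_\cX \cdot\,dx$ in place of $\E_X[\cdot]$. The only properties used there are $\sum_k \hat\eta_n^{(k)}(x)=1$ and $\hat\eta_n^{(j)}(x)\ge \hat\eta_n^{(k)}(x)$ on the decision region $\cR_j$, both of which are unaffected by the change of integrating measure, so the same chain yields
\[
V(F_n^{PI}) \le 2\sum_{i \ne j} \int_\cX \hat\eta_n^{(i)}(x)\hat\eta_n^{(j)}(x)\,dx + \cO(h_n^\gamma).
\]

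For the second half, I would apply Lemma~\ref{lemma::gkdelemma} with $g=f$, which is admissible because (A1)--(A2) guarantee that $f$ is bounded away from $0$ and H\"older-$\gamma$ smooth, to introduce the generalized estimator
\[
\tilde\eta_n^{(i)}(x) = \frac{1}{n}\sum_{l=1}^{n} \frac{K_{\tilde h_n}(x-X_l)\,\mathbf{1}_{\{Y_l=i\}}}{f(X_l)},
\]
which converges uniformly to $\pi^{(i)} f^{(i)}/f = \eta^{(i)}$ at rate $\cO(\tilde h_n^\gamma)$. Expanding $\int \tilde\eta_n^{(i)}\tilde\eta_n^{(j)}\,dx$ and invoking $\int K_{\tilde h_n}(x-a)K_{\tilde h_n}(x-b)\,dx = K_{\sqrt{2}\tilde h_n}(a-b)$ exactly as after (\ref{eq:bayeserrortheorem1seg3}) produces
\[
\sum_{i\ne j} \int_\cX \tilde\eta_n^{(i)}(x)\tilde\eta_n^{(j)}(x)\,dx = \frac{1}{n^2}\sum_{l,m} \frac{K_{\sqrt{2}\tilde h_n}(X_l-X_m)}{f(X_l)f(X_m)}\,\theta_{lm}.
\]
Setting $h_n = \sqrt{2}\tilde h_n$ keeps assumption (B) in force.

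The remaining step, and the one place I expect to actually have to work, is to replace $f(X_l)f(X_m)$ by $\hat f_{n,h_n}(X_l)\hat f_{n,h_n}(X_m)$ inside the sum. This is the direct analog of (\ref{eq:f2fhat}); combining $\|\hat f_{n,h_n}-f\|_\infty = \cO(h_n^\gamma)$ from Theorem~\ref{theorem::kdeconvergence} with the floor $f\ge f_{\min}>0$ (to control the two denominators) and the crude bound $\frac{1}{n}\sum_{l} \hat f_{n,h_n}(X_l) = \cO(1)$ (to absorb the sum across pairs) yields a perturbation of order $\cO(h_n^\gamma)$. The main obstacle is that here the combined power of $f$ in the denominator is $2$ rather than the $1$ appearing in Theorem~\ref{theorem::pluginerrortheorem1}, so the na\"ive expansion picks up an extra factor of $\hat f_{n,h_n}^{-1}f^{-1}$; nevertheless, $f_{\min}>0$ keeps the constants finite and the rate intact. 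Identifying the resulting pairwise kernel with $V_{lm,h_n}$ from (\ref{eq:Vlm}) delivers (\ref{eq:pluginerrorboundV}).
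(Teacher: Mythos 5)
Your proposal is correct and is exactly the ``almost the same argument'' the paper alludes to: the paper gives no written proof of Theorem~\ref{theorem::pluginerrortheorem2}, just a pointer to Theorem~\ref{theorem::pluginerrortheorem1}, and you have correctly filled in the two changes that pointer implies. Specifically, you (i) rederive the Lemma~\ref{lemma::pluginerrorboundlemma} bound with $\int_\cX \cdot\,dx$ in place of $\E_X[\cdot]$, which goes through verbatim because the chain (\ref{eq:bayeserrorboundlemmaseg1}) only uses the pointwise facts $\sum_k\hat\eta_n^{(k)}\equiv 1$ and $\hat\eta_n^{(i)}\ge\hat\eta_n^{(k)}$ on $\cR_i$, and (ii) invoke Lemma~\ref{lemma::gkdelemma} with $g=f$ rather than $g=f^{1/2}$, reflecting that $\eta^{(i)}\eta^{(j)} = \bigl(\pi^{(i)}f^{(i)}/f\bigr)\bigl(\pi^{(j)}f^{(j)}/f\bigr)$ once the $f$ weight from $P_X$ is dropped, so that the Gaussian-convolution step and the $f\mapsto\hat f_{n,h_n}$ substitution (the analog of (\ref{eq:f2fhat})) produce the kernel $V_{h_n}(X_l,X_m)=K_{h_n}(X_l-X_m)/(\hat f_{n,h_n}(X_l)\hat f_{n,h_n}(X_m))$ of (\ref{eq:Vlm}). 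Your remark about the higher total power of $f$ in the denominator is accurate but, as you note, harmless given (A1).
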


\subsection{Connection to Low Density Separation}
Low Density Separation \cite{Chapelle05}, a well-known criteria for clustering, requires that the cluster boundary should pass through regions of low density. Suppose the data $\{X_i\}_{i=1}^n$ lies on a domain $\Omega  \subseteq {R^d}$. Let $f$ be the probability density function on $\Omega$,  $S$ be the cluster boundary which separates $\Omega$ into two parts $S_1$ and $S_2$. Following the Low Density Separation assumption, \cite{NarayananBN06} suggests that the cluster boundary $S$ with low weighted volume $\int\limits_S {f\left( s \right)} ds$ should be preferable. \cite{NarayananBN06} also proves that a particular type of cut function converges to the weighted volume of $S$. By slight change of their proof, we obtain the following result relating the error of the plug-in classifier and the weighted volume of the cluster boundary.
\begin{MyLemma}\label{lemma::ldlemma}
For any kernel bandwidth sequence $\{h_n\}_{n=1}^{\infty}$ such that $\mathop {\lim }\limits_{n \to \infty } {h_n} = 0$ and $h_n > n^{-\frac{1}{4d+4}}$,  with probability $1$,
\begin{align}\label{eq:ldcut}
\mathop {\lim }\limits_{n \to \infty } {\frac{1}{n^2}\frac{\sqrt {2\pi} }{h_n}}{\sum\limits_{l,m} {{\theta _{lm}}{G_{lm,h_n}}}} = \int\limits_S {f\left( s \right)} ds
\end{align}
\end{MyLemma}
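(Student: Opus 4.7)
The plan is to reduce this claim to the graph cut convergence theorem of \cite{NarayananBN06}, which gives almost exactly the desired limit once the estimated density $\hat f_{n,h_n}$ appearing in the denominator of $G_{lm,h_n}$ is replaced by the true density $f$. I will introduce the oracle variant $\tilde G_{lm,h_n} := K_{h_n}(X_l - X_m)/(f^{1/2}(X_l) f^{1/2}(X_m))$ and proceed in two stages: first, control the error from replacing $\hat f$ by $f$ in the denominator; second, invoke the \cite{NarayananBN06} cut limit for $\tilde G$.

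For the first stage, Theorem~\ref{theorem::kdeconvergence} gives $\|\hat f_{n,h_n} - f\|_{\infty} = \cO(h_n^{\gamma})$ almost surely, and since $f \ge f_{\min} > 0$ by (A1), this passes to $\|\hat f_{n,h_n}^{1/2} - f^{1/2}\|_{\infty} = \cO(h_n^{\gamma})$ and likewise for the reciprocals. Consequently, uniformly in $l,m$,
$$|G_{lm,h_n} - \tilde G_{lm,h_n}| \le C\, h_n^{\gamma}\, K_{h_n}(X_l - X_m).$$
Multiplying by $\sqrt{2\pi}/(n^2 h_n)$ and summing over pairs with $\theta_{lm}=1$, the total replacement error is at most $C h_n^{\gamma}\cdot \frac{\sqrt{2\pi}}{n^2 h_n}\sum_{l,m}\theta_{lm} K_{h_n}(X_l - X_m)$, and the second factor is bounded in $n$ because it converges to the finite quantity $\int_S f^2(s)\,ds$ by the standard tubular-neighborhood computation also used in \cite{NarayananBN06}. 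Since $h_n^{\gamma}\to 0$, the gap vanishes almost surely.

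For the second stage, I will apply the \cite{NarayananBN06} cut convergence theorem to the oracle functional. The central expectation computation is
$$E\!\left[\frac{\sqrt{2\pi}}{h_n}\theta_{lm}\tilde G_{lm,h_n}\right] = \frac{2\sqrt{2\pi}}{h_n} \int_{S_1}\!\int_{S_2} K_{h_n}(x-y)\, f^{1/2}(x)\,f^{1/2}(y)\, dx\, dy,$$
which tends to $\int_S f(s)\,ds$ by a boundary-tube change of variables: parametrize points near $S$ by $(s,t)$ with $s\in S$ and $t$ the signed normal distance, the one-dimensional cross-boundary Gaussian integral contributes $h_n/\sqrt{2\pi}$ (cancelling the outer prefactor), and the tangential integration leaves $f^{1/2}(s)\cdot f^{1/2}(s)=f(s)$ on $S$. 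Concentration of the U-statistic $\frac{\sqrt{2\pi}}{n^2 h_n}\sum_{l,m}\theta_{lm}\tilde G_{lm,h_n}$ around its mean then follows from a Bernstein-type inequality for U-statistics; the assumption $h_n > n^{-1/(4d+4)}$ is calibrated so that the resulting deviation bound is summable in $n$ and Borel--Cantelli yields almost sure convergence.

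The hardest part is the U-statistic concentration: the kernel $\theta_{lm}\tilde G_{lm,h_n}/h_n$ is unbounded and its variance grows like $h_n^{-(d+1)}$ as $h_n \to 0$, so Hoeffding does not apply directly and the Bernstein bound for U-statistics (or equivalently, the VC-class argument used for kernel density estimators in \cite{Gine02}) must be invoked with the bandwidth lower bound from the hypothesis handling exactly the variance--bias tradeoff. Combining the two stages delivers (\ref{eq:ldcut}).
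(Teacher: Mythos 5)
Your approach is essentially the one the paper intends. The paper gives no self-contained proof of this lemma; it says only that it follows from the cut-convergence theorem of \cite{NarayananBN06} ``by slight change of their proof,'' and your two-stage plan (replace $\hat f_{n,h_n}$ by $f$ uniformly via Theorem~\ref{theorem::kdeconvergence} and (A1), then redo the Narayanan--Belkin--Niyogi boundary-tube expectation computation with the $f^{1/2}$-reweighted kernel and close by a concentration/Borel--Cantelli argument calibrated to the bandwidth lower bound $h_n > n^{-1/(4d+4)}$) is precisely that ``slight change.'' Stage one is sound: the uniform bound on $|\hat f_{n,h_n}^{\pm 1/2}-f^{\pm 1/2}|$ does follow from (A1) and Theorem~\ref{theorem::kdeconvergence}, and the resulting replacement error is controlled by the un-reweighted NBN cut, which stays bounded. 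Stage two correctly identifies the key ingredients (one-dimensional cross-boundary Gaussian integral $\int_{t_1<0}\int_{t_2>0}K^{(1)}_{h}(t_1-t_2)\,dt_2\,dt_1 = h/\sqrt{2\pi}$, tangential normalization, and the hard part being the U-statistic concentration for which the hypothesis $h_n > n^{-1/(4d+4)}$ is exactly what one needs).

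However, there is an arithmetic inconsistency you should resolve. Your expectation formula reads
$$E\!\left[\tfrac{\sqrt{2\pi}}{h_n}\theta_{lm}\tilde G_{lm,h_n}\right] = \tfrac{2\sqrt{2\pi}}{h_n}\int_{S_1}\!\int_{S_2}K_{h_n}(x-y)\,f^{1/2}(x)f^{1/2}(y)\,dx\,dy,$$
and the factor $2$ is correct for the ordered-pair sum $\sum_{l,m}\theta_{lm}$ that the paper uses (cf.\ the derivation inside the proof of Theorem~\ref{theorem::pluginerrortheorem1}, where $\sum_{i\ne j}\1_{\{Y_l=i\}}\1_{\{Y_m=j\}}=\theta_{lm}$ runs over ordered $(l,m)$). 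Combining that factor $2$ with your own $h_n/\sqrt{2\pi}$ from the normal integral gives $2\int_S f(s)\,ds$, not the stated $\int_S f(s)\,ds$. So either (i) your claim that the right-hand side ``tends to $\int_S f(s)\,ds$'' is off by a factor of two and must be corrected, in which case you should explain how you recover the paper's stated constant (the Narayanan--Belkin--Niyogi constant $\sqrt{\pi/t}$ is calibrated for the one-sided sum $\sum_{i\in S_1,\,j\in S_2}$, which counts each unordered cross pair once), or (ii) you have silently reinterpreted $\sum_{l,m}$ as $\sum_{l<m}$, which contradicts the usage in the proof of Theorem~\ref{theorem::pluginerrortheorem1}. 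Either way, you must make the bookkeeping explicit, because this constant propagates directly into the ratio $\sqrt{2\pi}/\pi$ in Theorem~\ref{theorem::ldcuttheorem}; a silent factor of two would invalidate that downstream result.

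One more minor point worth spelling out: your stage-one bound $|G_{lm,h_n} - \tilde G_{lm,h_n}| \le C h_n^\gamma K_{h_n}(X_l-X_m)$ holds uniformly only on the almost-sure event from Theorem~\ref{theorem::kdeconvergence} and for $n$ large enough that $\hat f_{n,h_n}\ge f_{\min}/2$ everywhere; you should state that you are working on that event before taking the sup over $l,m$, since the constant $C$ depends on $f_{\min}$ and $f_{\max}$.
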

Combining Lemma~\ref{lemma::ldlemma} and Theorem~\ref{theorem::pluginerrortheorem1},~\ref{theorem::pluginerrortheorem2}, we have
\begin{MyTheorem}\label{theorem::ldcuttheorem}
Under the assumption (A1) and (C1)-(C2), for any kernel bandwidth sequence $\{h_n\}_{n=1}^{\infty}$ satisfying assumption (B) and $h_n > n^{-\frac{1}{4d+4}}$,
\begin{align}\label{eq:ldcut1}
&\mathop {\lim }\limits_{n \to \infty } \frac{R\left(F_n^{PI}\right)}{h_n\int\limits_S {p\left( s \right)} ds} \le \frac{\sqrt{2\pi}}{\pi}
\end{align}
Also, the constant on the RHS of (\ref{eq:ldcut1}) is the best (cannot be smaller).
\end{MyTheorem}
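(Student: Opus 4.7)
The plan is to derive Theorem~\ref{theorem::ldcuttheorem} directly by combining Theorem~\ref{theorem::pluginerrortheorem1} with Lemma~\ref{lemma::ldlemma}. The crucial algebraic observation is that the two constants agree: $\frac{2}{\sqrt{2\pi}} = \frac{\sqrt{2\pi}}{\pi}$. So the theorem essentially says that the error bound from Theorem~\ref{theorem::pluginerrortheorem1}, once rescaled by $h_n^{-1}$, converges to $\frac{2}{\sqrt{2\pi}} \int_S f(s)\,ds$ by Lemma~\ref{lemma::ldlemma}, matching the claimed RHS $\frac{\sqrt{2\pi}}{\pi} \int_S f(s)\,ds$.

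Concretely, I would start from the inequality
$$R(F_n^{PI}) \le \frac{2}{n^2} \sum_{l,m} \theta_{lm} G_{lm,h_n} + \mathcal{O}(h_n^\gamma),$$
divide both sides by $h_n \int_S f(s)\,ds$, and regroup the main term as
$$\frac{2}{\sqrt{2\pi}\,\int_S f(s)\,ds} \cdot \Bigl( \frac{1}{n^2} \cdot \frac{\sqrt{2\pi}}{h_n} \sum_{l,m} \theta_{lm} G_{lm,h_n} \Bigr).$$
Since $h_n > n^{-1/(4d+4)}$ and $\{h_n\}$ satisfies (B), Lemma~\ref{lemma::ldlemma} applies and the bracketed factor tends almost surely to $\int_S f(s)\,ds$. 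Hence the main term converges almost surely to $\frac{2}{\sqrt{2\pi}} = \frac{\sqrt{2\pi}}{\pi}$. The residual contribution is $\mathcal{O}(h_n^{\gamma-1})/\int_S f(s)\,ds$, which I would argue vanishes either by a tighter analysis of the error in Theorem~\ref{theorem::pluginerrortheorem1} via Lemma~\ref{lemma::gkdelemma} on integrated quantities, or by noting that under $h_n \to 0$ the additive remainder is negligible compared to the $\Theta(h_n)$ main term when $\gamma > 1$.

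The main obstacle is exactly the bookkeeping of this residual in the regime $\gamma \le 1$, where $h_n^{\gamma-1}$ does not automatically decay; the cleanest fix is to replace the pointwise $\mathcal{O}$ in Theorem~\ref{theorem::pluginerrortheorem1} with an integrated-error estimate so that the remainder is $o(h_n)$ directly. For the sharpness claim on the RHS of (\ref{eq:ldcut1}), I would combine the tightness of the bound in Lemma~\ref{lemma::pluginerrorboundlemma} (equality when $\hat\eta_n^{(i)} \equiv 1/Q$) with the exact limit in Lemma~\ref{lemma::ldlemma}: for class-conditional densities that are forced to be nearly symmetric across $S$ so that $\eta^{(i)} \to 1/Q$ uniformly near the boundary, the inequality in Theorem~\ref{theorem::pluginerrortheorem1} becomes asymptotically tight, and the ratio $R(F_n^{PI})/(h_n \int_S f(s)\,ds)$ approaches $\sqrt{2\pi}/\pi$. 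This confirms no smaller universal constant can replace $\sqrt{2\pi}/\pi$.
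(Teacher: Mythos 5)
Your proposal takes exactly the route the paper takes: apply Theorem~\ref{theorem::pluginerrortheorem1} to bound $R(F_n^{PI})$ by $\frac{2}{n^2}\sum_{l,m}\theta_{lm}G_{lm,h_n}+\cO(h_n^\gamma)$, divide by $h_n\int_S f(s)\,ds$, and invoke Lemma~\ref{lemma::ldlemma} for the limit of the rescaled sum; the algebraic identity $2/\sqrt{2\pi}=\sqrt{2\pi}/\pi$ is indeed the whole content of the constant. The paper's own proof is only the one-line remark ``the conclusion follows from the fact that $\lim R(F_n^{PI})\le\lim\frac{2}{n^2}\sum\theta_{lm}G_{lm}$ and the bound is tight,'' so you have reconstructed it faithfully and even made the rescaling explicit.

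Two things are worth saying about the gaps you noticed, both of which are present in the paper itself and not artifacts of your attempt. First, the residual term: after dividing by $h_n\int_S f(s)\,ds$ the $\cO(h_n^\gamma)$ remainder from Theorem~\ref{theorem::pluginerrortheorem1} becomes $\cO(h_n^{\gamma-1})$, which does not vanish under the paper's standing assumption $\gamma>0$ unless $\gamma>1$; the paper never addresses this, and your suggestion to push for an $o(h_n)$ integrated-error estimate (rather than the pointwise $\cO(h_n^\gamma)$) is the natural repair, but it is not carried out in the paper either. Second, the sharpness claim: the paper's proof simply asserts ``the bound is tight.'' Your idea of saturating the inequality by driving $\hat\eta_n^{(i)}\to 1/Q$ is the right direction and comes from Lemma~\ref{lemma::pluginerrorboundlemma}, but note that the equality case there requires $\hat\eta_n^{(i)}\equiv 1/Q$ \emph{globally}, which is a degenerate configuration in which there is no well-defined boundary $S$; a rigorous sharpness argument would need to exhibit a sequence of non-degenerate problems approaching this limit. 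Neither of these is filled in by the paper, so your proof proposal is as complete as the source on both points.
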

\begin{proof}
The conclusion follows from the fact that $\mathop {\lim }\limits_{n \to \infty } {R\left(F_n^{PI}\right)} \le \mathop {\lim }\limits_{n \to \infty }\frac{2}{n^2}\sum\limits_{l,m} {{\theta _{lm}}{G_{lm}}}$ and the bound is tight.
\qed
\end{proof}

Theorem~\ref{theorem::ldcuttheorem} shows that the error of the plug-in classifier is bounded from above by the weighted volume of the cluster boundary (scaled by $h_n$). It is also worth noting that $R\left(F_n^{PI}\right)={o}\left({\int\limits_S {f\left( s \right)} ds}\right)$.

\subsection{Connection to Diffusion Maps}
Consider the complete graph $\cG$ whose vertices are associated with the data lying on a submanifold of $\R^d$, and the weight of the edge between data $x$ and $y$ is denoted by $e_h\left(x,y\right)$. $e_h\left(x,y\right)$ is determined by the similarity kernel induced by the bound for the misclassification error or the volume of the misclassified region of the unsupervised plug-in classifier, namely $G_h\left(x,y\right)$ or $V_h\left(x,y\right)$ defined in (\ref{eq:Glm}) and (\ref{eq:Vlm}) respectively. Empirically methods such as unsupervised SVM suggest clustering be performed by minimizing the error of unsupervised classification $R\left(F_{S_C}\right)$. Minimizing the error bound for the plug-in classifier (\ref{eq:pluginerrorboundG}) (or (\ref{eq:pluginerrorboundV})) is equivalent to finding the (normalized) minimum-cut in the graph $\cal G$, which is solved by computing the normalized graph Laplacian. Let $d_h\left(x\right)=\int\limits_{\cX}{e_h\left(x,y\right)f\left(y\right)}dy$, the normalized graph Laplacian computes the anisotropic kernel $p_h\left(x,y\right)=\frac{e_h\left(x,y\right)}{d_h\left(x\right)}$.
It is shown in \cite{Coifman06} that the forward infinitesimal operator of the corresponding Markov chain is
\begin{align}\label{eq:infinitesimaloperator}
{\cH}^{\left(\alpha\right)}\phi=\Delta \alpha-\frac{\Delta\left(f^{1-\alpha}\right)}{f^{1-\alpha}}\phi
\end{align}
\noindent where $\Delta$ is the Laplace-Beltrami operator.

If $e_h\left(x,y\right)=G_h\left(x,y\right)$, then $\alpha=\frac{1}{2}$ in (\ref{eq:infinitesimaloperator}), the forward infinitesimal operator reduces to
\begin{align}\label{eq:FokkerPlankOperator}
{\cH}^{\left(\frac{1}{2}\right)}\phi=\Delta \phi-\frac{\Delta\left(\sqrt f\right)}{\sqrt f}\phi
\end{align}
\noindent which yields the backward Fokker-Planck operator.

Moreover, if $e_h\left(x,y\right)=V_h\left(x,y\right)$, then $\alpha=1$ in  (\ref{eq:infinitesimaloperator}), and the forward infinitesimal operator is
\begin{align}\label{eq:LaplaceBeltramiOperator}
{\cH}^{\left(1\right)}\phi=\Delta \phi
\end{align}
\noindent which is the Laplace-Beltrami operator. In this case, the corresponding Markov chain converges to the Brownian motion, and the normalized graph Laplacian disregards the data distribution and only captures the Riemannian geometry of the data. It is consistent with the choice of the similarity kernel $V\left(x,y\right)$ which reflects only the geometric property (volume of the misclassified region) of the data.

\section{Conclusion}\label{sec::conclusion}
Empirical unsupervised classification methods gain satisfactory practical results, but few methods consider the error of unsupervised classifiers. We study the misclassification error of unsupervised classification by two popular classifiers, i.e. the nearest neighbor classifier (NN) and the plug-in classifier, and build the connection between the error of unsupervised plug-in classifier and the weighted volume of cluster boundary. The normalized graph Laplacian from the similarity kernel induced by the unsupervised plug-in classifier recovers the Fokker-Planck operator and the Laplace-Beltrami operator, revealing close relationship to different types of Diffusion maps.

\bibliographystyle{splncs}
\bibliography{mybib}

\end{document}